\newcommand{\argmin}[1]{\underset{#1}{\operatorname{arg}\!\operatorname{min}}\;}
\newtheorem{theo}{Theorem}[section]
\newtheorem{definition}[theo]{Definition}
\newtheorem{prop}[theo]{Proposition}
\newtheorem{coro}[theo]{Corollary}
\newtheorem{lem}[theo]{Lemma}
\newtheorem{assu}[theo]{Assumption}
\newcommand\HUGE{\@setfontsize\Huge{50}{60}}
\newcommand\norm[1]{\lVert#1\rVert}
\renewcommand{\arraystretch}{1}
\begin{document}

% If your paper is accepted and the title of your paper is very long,
% the style will print as headings an error message. Use the following
% command to supply a shorter title of your paper so that it can be
% used as headings.
%
%\runningtitle{I use this title instead because the last one was very long}
% If your paper is accepted and the number of authors is large, the
% style will print as headings an error message. Use the following
% command to supply a shorter version of the authors names so that
% they can be used as headings (for example, use only the surnames)
%
%\runningauthor{Surname 1, Surname 2, Surname 3, ...., Surname n}

\twocolumn[

\aistatstitle{Parametric Fairness with Statistical Guarantees}
% Other suggestions: Parametric fairness with statistical guarantess

%\maketitle

\aistatsauthor{ Fran\c{c}ois HU \And Philipp Ratz \And Arthur Charpentier }

\aistatsaddress{ Université de Montréal\\ francois.hu@umontreal.ca \And Université du Québec à Montréal\\ ratz.philipp@courrier.uqam.ca \And Université du Québec à Montréal\\ charpentier.arthur@uqam.ca} ]

\begin{abstract}
    %Algorithmic Fairness has grown in recent years due to societal and regulatory attention to biases stemming from Machine Learning models. To quantify and optimize fairness, most applications center on metrics like Demographic Parity for regression or Equalized Odds for classification tasks and a host of computationally advantageous post-processing methods have been developed around them. However, the usage of these metrics often precludes the user from integrating crucial domain knowledge or normative impositions. In this article, we extend the concept of Demographic Parity to incorporate distributional properties in the predictions, allowing expert knowledge to be used in the fair solution. Our extension also takes into account various real-world constraints, such as sum stability, which imposes a constant sum of scores between the post-processed and initial predictions. Using a real-world example for wages, we show that our approach is useful for many issues arising in practice, such as small training sets and constraints on total spending.

    Algorithmic fairness has gained prominence due to societal and regulatory concerns about biases in Machine Learning models. Common group fairness metrics like Equalized Odds for classification or Demographic Parity for both classification and regression are widely used and a host of computationally advantageous post-processing methods have been developed around them. However, these metrics often limit users from incorporating domain knowledge. Despite meeting traditional fairness criteria, they can obscure issues related to intersectional fairness and even replicate unwanted intra-group biases in the resulting fair solution. To avoid this narrow perspective, we extend the concept of Demographic Parity to incorporate distributional properties in the predictions, allowing expert knowledge to be used in the fair solution. We illustrate the use of this new metric through a practical example of wages, and develop a parametric method that efficiently addresses practical challenges like limited training data and constraints on total spending, offering a robust solution for real-life applications.
\end{abstract}

\section{INTRODUCTION}

To prevent the use of sensitive information such as gender or race in learning algorithms, the field of Algorithmic fairness aims to create predictions that are free of the influences from such variables. Discriminatory biases in real-life datasets lead standard machine learning algorithms to behave unfairly, even when excluding sensitive attributes. This issue has prompted the need to develop methods that optimize prediction performance while satisfying fairness requirements. Several notions of fairness have been considered~\cite{barocas-hardt-narayanan,zafar2019fairness} in the literature. In this paper, we focus on the \emph{Demographic Parity} (DP)~\cite{calders2009building} that requires the independence between the sensitive feature and the predictions, while not relying on labels. The DP-fairness is being pursued extensively in the field, as evidenced by recent research \cite{calders2009building, zemel2013learning, chzhen2019leveraging, agarwal2019fair, elie2021overview, hu2023sequentially}.

Broadly speaking, approaches to obtain algorithmic fairness can be categorized into \textit{pre-processing} methods which enforce fairness in the data before applying machine learning models \cite{calmon2017optimized, adebayo2016iterative}, \textit{in-processing} methods, who achieve fairness in the training step of the learning model \cite{Agarwal_Beygelzimer_Dubik_Langford_Wallach18,Donini_Oneto_Ben-David_Taylor_Pontil18,agarwal2019fair}, and \textit{post-processing} which reduces unfairness in the model inferences following the learning procedure \cite{chiappa2020general,chzhen2020fair, Chzhen_Denis_Hebiri_Oneto_Pontil20Recali,denis2021fairness}. Our work falls into the latter, as this category of algorithms offers computational advantages and are easiest to integrate in existing machine learning pipelines.

Most of the current studies involving post-processing methods employ a neutral approach to enforcing DP-fairness, where model outputs are taken as given and fairness is achieved by constructing a common distribution. However, domain knowledge is often lost when transforming scores without special care. Further, fairness is a multi-faceted issue, where simple optimizations on one metric can lead to new biases in another. Such situations can arise due to issues related to intersectional fairness \cite{foulds2020intersectional}, that is, a population can possess multiple sensitive groups and individuals might reside in an intersection of them. If the marginal predictions for a sensitive group can be further split according to a secondary sensitive variable, simply correcting for one but not the other can have undesirable results. We visualize the issue in the left pane of Figure \ref{fig:pres_unfair}, although an agnostic correction method was chosen, an implicit choice related to the resulting distribution was made. This becomes concerning in the presence of latent sensitive attributes as explicit correction methods such as developed by \cite{hu2023sequentially} cannot be applied directly.
\begin{figure*}[htbp]
\centering
\includegraphics[width=0.85\textwidth]{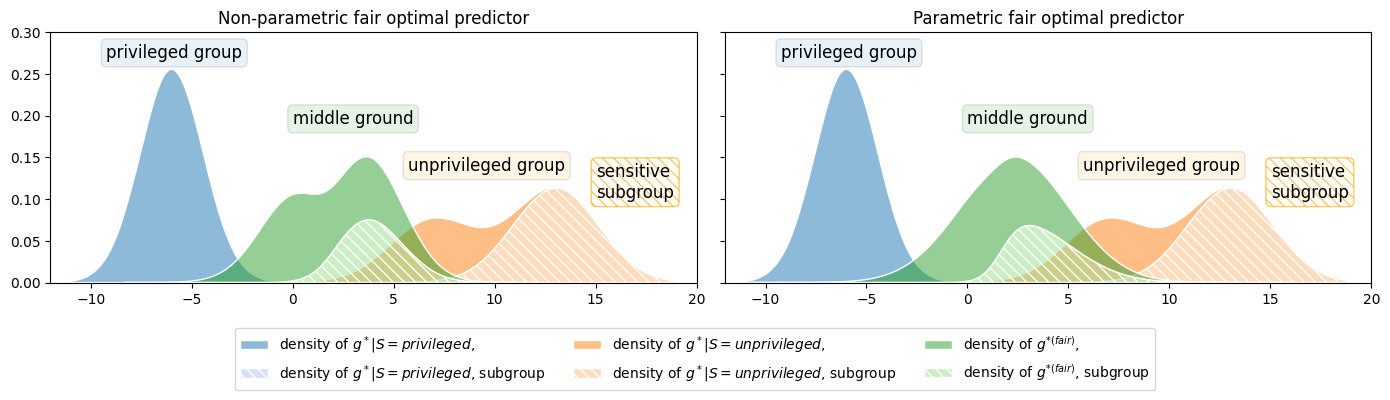}
\caption{Base predictors are shown in blue and orange, while the optimal fair predictor is in green. In this example, integrating fairness considerations with domain knowledge effectively mitigates intersectional fairness issues. Here, $g^*$ corresponds to the Bayes rule, while $g^{*(\text{fair})}$ corresponds to the associated fair optimal predictor.}\label{fig:pres_unfair}
\end{figure*}

From a more practical standpoint, achieving algorithmic fairness also presents challenges that go beyond predictive accuracy under DP-fairness. Two important constraints are overall prediction stability and a smooth transition from unfair to fair regimes. Prediction stability essentially translates to keeping the average score constant, ensuring minimal disturbances to the overall allocations. As a working example, consider a company wishing to achieve fairness in wages with respect to a particular attribute, stability then translates to keeping the overall wage expenses constant before and after fairness enforcing procedures. As there is an inherent trade-off between achieving optimal predictive accuracy and minimal unfairness, a smooth transition means that a fair solution can be achieved across several intermediate steps. Following the example, this would translate to having a transitional period to avoid abrupt changes. Recent literature such as \cite{chzhen2022minimax}, has proposed using relative fairness improvements, which provides a way to achieve a transition to fair results over multiple periods. 

\paragraph{Main Contributions} 
In this article, we propose and study a methodology that tries to satisfy all these points. Summarized, we contribute the following to the field:
\begin{itemize}
    \item We introduce the concept of parametric fair solutions, satisfying shape constraints on fair outcomes. In line with previous research, we develop this method through the use of Wasserstein barycenters. 
    % In line with previous literature, we 
    % Through the use of Wasserstein barycenters, we generalize the notion of Demographic Parity constraints to vector-valued outcomes, in both exact and approximate cases. 
    % \item We introduce parametric solutions, satisfying shape constraints on the fair outcomes and present their use through several practical considerations
%    ... Introduce some novel parametric approaches (so called shape constraints) that satisfy several practical considerations ....
    \item We provide an efficient plug-in method and establish fairness and risk guarantees.
%    ... Establish convergence rates for the resulting post-processing method w.r.t. a risk that combines error rate and unfairness measure ...
    \item Through the use of multiple real-world datasets and different scenarios, we illustrate the effectiveness of our approach. 
    %... Illustrate on multiple real-world datasets with varying levels of bias, yielding effective decision-making rules while maintaining a pre-defined level of fairness ...
\end{itemize}

\paragraph{Related Work}
Within the algorithmic fairness literature, much of the work has developed around Wasserstein barycenters \cite{chiappa2020general, gordaliza2019obtaining, chzhen2020fair} with applications such as investigated in~\cite{ratz2023addressing, charpentier2023mitigating} and our approach can be considered an extension thereof. Of particular use are closed form solutions for the optimal transportation plan, as developed by \cite{chzhen2020fair, gouic2020projection, gaucher2022fair}, which enable a seamless integration of the procedure to most model architectures. Approximate fairness, useful for multi-period transitions, was studies by \cite{chzhen2022minimax}, who proposed a risk-fairness trade-off unfairness measure based on Wasserstein barycenters. 

To obtain parametric solutions based on the Wasserstein distance, the minimum distance approach \cite{basu2011statistical} is of relevance. \cite{bassetti2006asymptotic} proposed an estimation procedure for location-scale models based on the Wasserstein distance, which was then extend to an estimator called the \emph{minimum expected Wasserstein estimator} (MEWE) by \cite{bernton2019parameter}, who also point out the robustness to outliers of the method. 

Whereas both of these fields have independently advanced, there is, to the best of our knowledge, limited exploration into the combination of them. Whereas it seems natural to incorporate domain knowledge into predictions, an inherent difficulty is that optimization approaches often have different metrics. The fact that both the procedures for fairness and minimum distance estimation are based on the Wasserstein distance yields more consistent and interpretable results.

\paragraph{Notation} %Given the random tuple $(\boldsymbol{X}, S, Y)$, and assuming that the following density exists, for each $s\in\mathcal{S}$ and for any predictor $g$, we denote $\nu_{g}$ as the probability measure of $g(\boldsymbol{X}, S)$ and $\nu_{g|s}$ as $g(\boldsymbol{X}, S)|S=s$. $F_{g|s}:\mathbb{R}\to [0, 1]$ and $Q_{g|s}:[0, 1] \to \mathbb{R}$ are, respectively, its CDF function defined as $F_{g|s}(u) := \mathbb{P}\left( g(\boldsymbol{X}, S) \leq u|S=s \right)$ and its associated quantile function $Q_{g|s}(v) := \inf\{u\in\mathbb{R}:F_{g|s}(u)\geq v \}$. Given a mapping $T: \mathcal{Y} \to  \mathcal{Y}$ with $ \mathcal{Y}\subset \mathbb{R}$, we define the pushforward operator $\sharp$ characterized by $(T\sharp\nu)(\cdot) := \nu \circ T^{-1}(\cdot)$.
Consider a function $g$ and a random tuple $(\boldsymbol{X}, S)\in\mathcal{X}\times\mathcal{S}\subset\mathbb{R}^d\times \mathbb{N}$, with positive integer $d \geq 1$ and distribution $\mathbb{P}$. Let $\mathcal{V}$ be the space of probability measures on $\mathcal{Y}\subset \mathbb{R}$. Let $\nu_{g}\in\mathcal{V}$ and $\nu_{g|s}\in\mathcal{V}$ be, respectively, the probability measure of $g(\boldsymbol{X}, S)$ and $g(\boldsymbol{X}, S)|S=s$. $F_{g|s}(u) := \mathbb{P}\left( g(\boldsymbol{X}, S) \leq u|S=s \right)$ corresponds to the cumulative distribution function (CDF) of $\nu_{g|s}$ and $Q_{g|s}(v) := \inf\{u\in\mathbb{R}:F_{g|s}(u)\geq v \}$ its associated quantile function. Given a mapping $T: \mathcal{Y} \to  \mathcal{Y}$ with $ \mathcal{Y}\subset \mathbb{R}$, we define the pushforward operator $\sharp$ characterized by $(T\sharp\nu)(\cdot) := \nu \circ T^{-1}(\cdot)$.

\paragraph{Outline of the paper} The article is organized as follows: Section~\ref{sec:background} introduces the Demographic Parity concept of fairness, followed by the presentation of our parametric fairness methodology in Section~\ref{sec:parametric}. We propose, in Section~\ref{sec:datadriven}, a data-driven approach where we establish fairness and estimation guarantees. The performance of our estimator is assessed on real data in Section~\ref{sec:Evaluation}, and we draw conclusions in Section~\ref{sec:conclusion}.

\section{BACKGROUND ON FAIRNESS UNDER DEMOGRAPHIC PARITY}
\label{sec:background}
%\mathcal{X}\times\mathcal{S}\times 
Let $(\boldsymbol{X},S,Y)$ be a random tuple with distribution $\mathbb{P}$. $\boldsymbol{X}\in\mathcal{X}\subset \mathbb{R}^d$ represents the non-sensitive features, $Y\in \mathcal{Y} \subset \mathbb{R}$ represents the task to be estimated, and $S\in \mathcal{S}:= \{1, \dots, K\}$ a discrete sensitive feature with distribution $(p_s)_{s\in\mathcal{S}}$ where $p_s := \mathbb{P}(S=s)$ and we assume $\min_{s\in\mathcal{S}} \{p_s\} > 0$. We denote $\mathcal{G}$ the class of all predictors of the form $g:\mathcal{X}\times\mathcal{S}\to \mathcal{Y}$ that have an absolutely continuous \textit{w.r.t.} the Lebesgue measure. More precisely, we require the following assumption:

\begin{assu}\label{assu:general}
    For $g\in\mathcal{G}$, measures $\{\nu_{g|s}\}_{s\in\mathcal{S}}$ are non-atomic with finite second moments.
\end{assu}

\paragraph*{Risk measure} % Regression setting ? 

We focus on the regression case, although our findings are extendable to the classification case, see~\cite{gaucher2022fair}. Our objective is to minimize the squared risk in $\mathcal{G}$. Notably, recall that the Bayes regressor $g^*(\boldsymbol{X}, S) := \mathbb{E}[Y | \boldsymbol{X}, S]$ corresponds to the optimal predictor that minimizes squared risk,
\begin{equation}\label{eq:OptRisk}
\text{(\textbf{Risk measure})}\quad\mathcal{R}(g) := \mathbb{E}(Y - g(\boldsymbol{X}, S) )^2\enspace.
\end{equation}
The optimal risk, is defined as $\mathcal{R}^* := \inf_{g\in\mathcal{G}}\mathcal{R}(g)$ and for any subclass $\mathcal{G}'\subset \mathcal{G}$, the \textit{excess-risk} of the class $\mathcal{G}$ is defined by
\begin{equation*}\label{eq:ExcessRisk}
\mathcal{E}(\mathcal{G}') := \inf_{g\in\mathcal{G}'}\mathcal{R}(g) - \mathcal{R}^*\enspace.
\end{equation*}
This helps to quantify performance disparities among predictors that impose conditions on the class $\mathcal{G}$, such as ensuring fairness (denoted $\mathcal{G}^0$) or limiting predictors to specific distributions (denoted $\mathcal{G}_{\Theta}$), or both ($\mathcal{G}_{\Theta}^0$).

\paragraph*{Demographic Parity}
For a predictor $g\in\mathcal{G}$, the (Strong) Demographic Parity (DP) is satisfied if the probability assigned is invariant across the values of the sensitive attributes, i.e., for all $s\in \mathcal{S}$,
\begin{multline*}
    \label{eq:DPFair}
\sup\limits_{u\in\mathbb{R}} \left| \mathbb{P}(g(\boldsymbol{X}, S) \leq u) - \mathbb{P}(g(\boldsymbol{X}, S) \leq u | S = s) \right| = 0\enspace,
\end{multline*}
%Or equivalently with CDFs,
%\begin{equation*}
%\max_{s\in \mathcal{S}}\int_{u\in\mathcal{Y}} \left|\ F_{g}(u) - F_{g|s}(u)\ \right|du = 0 \quad \enspace,
%\end{equation*}
or equivalently with quantiles,
\begin{equation*}
    \max_{s\in \mathcal{S}}\int_0^1 \left|\ Q_{g}(u) - Q_{g|s}(u)\ \right|du = 0 \quad \enspace.
\end{equation*}
To extend this last definition to probability measures, we classically consider the Wasserstein distance, defined below. %{\color{red} attention comments}
%\arth{je propose de changer la définition pour utiliser le transport, et pas le couplage}
%\fran{en cours ...}
\begin{definition}[Wasserstein distances]\label{def:W2}
    Let $\nu$ and $\nu'$ be two probability measures. The $p$-Wasserstein distance between $\nu$ and $\nu'$ is defined as
    \begin{equation*}
        \mathcal{W}_p(\nu, \nu') = \left(\inf_{\pi\in\Pi_{\nu, \nu'}} \left\{ \int_{\mathcal{Y}\times\mathcal{Y}} (Y-Y')^p d\pi(Y, Y') \right\}\right)^{1/p}\enspace,
    \end{equation*}
    where $\Pi_{\nu, \nu'}$ is the set of distributions on $\mathcal{Y}\times\mathcal{Y}$ having $\nu$ and $\nu'$ as marginals. The coupling $\pi$ which achieves the infimum is called the optimal coupling.
\end{definition}

Further, if one measure in the p-Wasserstein distance has a density, the optimal coupling is deterministic (\cite{santambrogio2015optimal} Thm.~2.9). Given $X\sim \nu'$ and assuming $\nu$ has a density, a mapping $T:\mathbb{R}\to \mathbb{R}$ exists (and is unique if $p>1$), satisfying $T\sharp\nu = \nu'$ and
%Further, if one measure in the p-Wasserstein distance has a density, the optimal coupling is deterministic (see \cite{santambrogio2015optimal} Thm.~2.9). Let $X\sim \nu'$ and assume that $\nu$ has a density, then there exists a mapping $T:\mathbb{R}\to \mathbb{R}$ (which is unique as soon as $p>1$) s.t. $T\sharp\nu = \nu'$ and
$$
\mathcal{W}_p^p(\nu, \nu') = \mathbb{E}(X-T(X))^p\enspace.
$$
The $p$-Wasserstein distance between two univariate measures $\nu$ and $\nu'$ can also be expressed by quantiles 
$$
\mathcal{W}_p^p\left( \nu, \nu' \right) = \int_0^1 \left|\ Q_{\nu}(u) - Q_{\nu'}(u)\ \right|^p du\enspace,
$$
which expresses the link between DP fairness and the widespread use of the Wasserstein distance within the field. Indeed, the unfairness of a predictor $g\in\mathcal{G}$ can be quantified by the unfairness measure,
\begin{multline}
    \label{eq:Unfairness}
    \text{(\textbf{Unfairness})}\quad\mathcal{U}(g) := \max_{s\in\mathcal{S}}\mathcal{W}_1\left( \nu_{g}, \nu_{g|s} \right)\enspace.
\end{multline}
We express below the \textit{exact} and \textit{approximate} DP fairness through the 1-Wasserstein distances, with the notion of Relative Improvement (RI) first introduced in \cite{chzhen2022minimax}.
% \fran{TODO: Maybe explain why we prefer to use this measure of fairness instead of KS-test or TV ... }
\begin{definition}[Fairness under Demographic Parity]\label{def:Unfairness}
Given an RI $\varepsilon\geq 0$, a predictor $g$ is called approximately fair under DP if and only if $\mathcal{U}(g) \leq  \varepsilon\times \mathcal{U}(g^*)$. In particular, $g$ is called exactly fair if and only if $\mathcal{U}(g) = 0$.
\end{definition}

%\begin{rem}[Weighted sum of quadratic risk] The optimization in Eq.~\eqref{eq:ApproxOptim} can be enhanced by using the weighted sum of quadratic risk. This method assigns varying weights to different task prediction errors, based on their significance or priority, which determines their relative importance. It is c@ommonly applied in scenarios where task errors have differing consequences or costs, such as in finance or healthcare decision-making. Formally for $\boldsymbol{\lambda} = (\lambda_1, \dots, \lambda_L)$ with $\lambda_t\geq 0$, we define the weighted norm $\norm{Y}^2_{\boldsymbol{\lambda}} :=  \sum_{t=1}^{L} \lambda_t \cdot y_t^2$ in $\mathbb{R}^L$ and the related Unfairness-Risk optimisation in MTL framework corresponds to minimizing,
%\begin{equation}\label{eq:ExactOptimMulti}
%\inf\limits_{g\in\mathcal{G}^\varepsilon} \mathcal{R}_{\boldsymbol{\lambda}}(g) = \inf\limits_{g\in\mathcal{G}^\varepsilon} \mathbb{E}\norm{ Y - g(\boldsymbol{X}, S) }^2_{\boldsymbol{\lambda}} \enspace.
%\end{equation}
%For ease of presentation, we will consider the usual 
%$L_2$-norm that sets $\lambda_t= 1$ for any task $t\in[L]$. As a weighted norm within the tasks would suggest an ordering within the fairness measures. A more in-depth study of this procedure with the weighted norm is hence left for future works.
%\end{rem}
%\phil{Maybe we need to highlight the difference between multi-task learning and multivariate learning to avoid confusion}
Recall that $\mathcal{G}$ represents the class of all predictors verifying A.~\ref{assu:general}. We denote $\mathcal{G}^0$ the class of exactly DP-fair predictors, i.e.,
$$
\mathcal{G}^0 := \left\{ g\in\mathcal{G}: \mathcal{U}(g) = 0\right\}\enspace,
$$
In the context of approximate fairness, our focus lies in the relative improvement of a fair predictor compared to Bayes' rule $g^*$. Considering this framework, we extend the $\mathcal{G}^0$ notation, denoting for any $\varepsilon\geq 0$,
$$
\mathcal{G}^{\varepsilon} := \{g\in\mathcal{G}: \mathcal{U}(g) \leq \varepsilon\times\mathcal{U}(g^*)\}\enspace.
$$
the set of all $\varepsilon$-RI fair predictors in $\mathcal{G}$. In particular, for all $\varepsilon \leq \varepsilon'$ in $[0, 1]$, $\mathcal{G}^{0}\subset\mathcal{G}^{\varepsilon}\subset \mathcal{G}^{\varepsilon'}\subset\mathcal{G}^{1}$ where $g^*\in\mathcal{G}^1$ and $\mathcal{G}^0$ corresponds to the set of exactly DP-fair predictors. 

Let us now turn our attention to exact fairness, which will later be extended to the approximate methodology in Section~\ref{subsec:approx_fairness}.

\paragraph*{Optimal Fair Predictor For Exact Fairness}
The problem of optimal prediction has been well studied. For example, \cite{chzhen2020fair, gouic2020projection, gaucher2022fair, hu2023fairness} use the optimal transport theory to develop fair solutions for various tasks, such as classification or regression, or both. Indeed,
%We summarise the findings in the following proposition:
%\begin{prop}[Optimal exactly fair predictor, adapted from \cite{chzhen2020fair, gouic2020projection, gaucher2022fair}]
%\label{prop:ExactFairPredictor}
given $\varepsilon=0$, the excess-risk is given by,
\begin{equation}\label{eq:FairExcessRisk}
\mathcal{E}(\mathcal{G}^0) = \min_{g\in\mathcal{G}}\sum_{s\in\mathcal{S}} p_s\mathcal{W}_2^2(\nu_{g^*|s}, \nu_{g})\enspace.
\end{equation}
Additionally, this expression gives us a fair optimal predictor denoted $g^{(0)*}$ of the form,
\begin{equation}\label{eq:FairPredTh}
    g^{(0)*}(\boldsymbol{x}, s) = T_{g^*|s \,\to\, \text{bary}} \left( g^*(\boldsymbol{x}, s)\right) ,\quad (\boldsymbol{x}, s)\in\mathcal{X}\times\mathcal{S} \enspace,
\end{equation}
where $T_{g^*|s \,\to\, \text{bary}}$ 
is the optimal transport map from $\nu_{g^*|s}$ to the Wasserstein barycenter. A closed-form solution can be explicitly derived as follows:
    $$T_{g^*|s \,\to\, \text{bary}}(\cdot) = \left(\sum_{s'\in\mathcal{S}}p_{s'}Q_{g^*|s'}\right)\circ F_{g^*|s}(\cdot)\enspace.$$
%\end{prop}
This outcome enables precise fair learning through post-processing, as illustrated in the left pane of Fig.~\ref{fig:pres_unfair} (see green density). In the next section, we extend this result to \textit{parametric fairness}, allowing the incorporation of expert knowledge into the fair solution and showcasing favorable distributional properties.

\section{PARAMETRIC DEMOGRAPHIC PARITY}\label{sec:parametric}
%\fran{Another synonym for "Flexible" ? parametric}
%\phil{I think "Tailored" would be quite nice, because we can tailor the predictions to satisfy a specific need}
We examine the impact of imposing a specific shape constraint on the optimal fair predictor. This constraint narrows down our focus to a subset of parametrized predictors, denoted as $\mathcal{G}_\Theta$. Our estimations are confined within this subset for analysis.

\subsection{Distributional Constraints} % or distributional constraint

It is worth noting that a distributional constraint imposes limitations on the estimation, but can actually help achieve more specific goals. Hence, we use the term constraint in the optimization sense here. Before listing the technical details, we present a short motivation for the use of a parametric subclass $\mathcal{G}_\Theta \subset \mathcal{G}$, referring to this restriction as the class of parametric predictors. In this paper, we refer to $\mathcal{G}_\Theta$ as a family of continuous distributions. 

%Note that although the distributional constraint imposes limitations on estimation, it can actually help achieve more specific goals, as we will outline below. Therefore, a constraint in the optimization sense is not necessarily a constraint in the modeling sense. We provide some common examples of the subclass $\mathcal{G}_\Theta \subset \mathcal{G}$, referring to this restriction as parametric fair predictors.

\subsubsection{Domain Expertise}

The choice of the family of distribution $\mathcal{G}_\Theta$ is contingent upon both the specific application and its associated social considerations. For example, some score are supposed to follow a specific distribution:

\paragraph*{Gaussian Distribution $\mathcal{G}_\Theta = \mathcal{N}(\mu, \sigma^2)$} For instance, this distributional constraint works in scenarios such as university grading systems, where grades are expected to follow a Gaussian pattern (centered around $\mu$ with variance $\sigma^2$) devoid of racial bias. 

%\paragraph*{Uniform Distribution $\mathcal{G}_\Theta = \mathcal{U}([a, b])$} For example, this distribution finds application in corporate contexts, especially in ensuring gender-neutral pay structures. Here, salaries are uniformly distributed between $a$ and $b$, determined by factors like maintaining a minimal pay gap between employees at different hierarchical levels.

%\textbf{Dirac delta distribution $\mathcal{G}_\Theta = \delta_{(\Bar{\boldsymbol{x}}, \Bar{s})}$}. In univariate case, regulators require companies to have \textbf{almost} the same mean salary with \textbf{lower} gender discrimination, and little deviation (\textbf{approximate} fairness)

\subsubsection{Indirectly Mitigating Intersectional Unfairness and Practical Considerations}

Moving beyond traditional fairness evaluations based on entire groups (such as Demographic Parity), "distributional unfairness" acknowledges biases within specific sections of unprivileged groups. While some areas might seem just, others suffer from injustice. Fairness is not only about overall group comparison; it involves recognizing unfairness in specific treatment aspects. For instance, only focusing on a single sensitive attribute is insufficient~\cite{kong2022intersectionally}; it overlooks intersecting subgroups, leading to \textit{fairness gerrymandering}~\cite{kearns2018preventing}. This term describes the problem when unfairness is assessed only over a few arbitrarily chosen groups. As an example, it was revealed that algorithms recognized women with darker skin tones with reduced accuracy, leading to different treatments (i.e, output distributions) within women population. Finding a simple middle ground using the Wasserstein barycenter can hence lead to disadvantages for subgroups within the population. 

%, exposing both gender and racial discrimination.
%\subsubsection{... subpopulation shifts}%scattered sensitive information

%\subsubsection{Other Practical Considerations}

\paragraph*{Representation Bias} In machine learning, representation bias occurs when models exhibit lower performance for demographic groups that are underrepresented in the training data. This discrepancy can lead to significant disparities in outcomes. One way to address this bias is through the parametric fairness approach, which establishes a shared distribution, or \textit{belief}, among both privileged and unprivileged groups. By doing so, this approach helps mitigate representation bias, enhancing the model's fairness and accuracy across diverse demographics.

\paragraph*{Mean Output Preservation} %\fran{Maybe replace budget by capital ?}
%stability
To lay the groundwork for studying parametric fairness, we first need to study changes between our optimal fair mean predictions and uncalibrated mean prediction $\mathbb{E}[g^*(\boldsymbol{X}, S)]$. For instance, in predicting an individual's wage using $g\in\mathcal{G}$, the \textit{budget deviation} refers to the deviation from the initial mean output as measured by 
$$
\mathcal{D}(g) := \mathbb{E}[g(\boldsymbol{X}, S) - g^*(\boldsymbol{X}, S)]\enspace.
$$
If $\mathcal{D}(g)=0$, we achieve \textit{mean output preservation}. Here, $\nu_{g^{(0)*}}$ represents the Wasserstein barycenter with weights $(p_s)_{s\in\mathcal{S}}$ and means $(m^*_s)_{s\in\mathcal{S}}$ where $m^*_s:=\mathbb{E}_{\boldsymbol{X}|S=s}[g^*(\boldsymbol{X}, S)]$. This further ensures $\mathcal{D}(g^{(0)*})=0$ due to the barycenter's mean property,
$$
\mathbb{E}[g^{(0)*}(\boldsymbol{X}, S)] = \sum_{s\in\mathcal{S}}p_s\cdot m^*_s = \mathbb{E}[g^*(\boldsymbol{X}, S)]\enspace.
$$
%\begin{rem}[Budget allocation] 
%In an algorithmic fairness framework proposed by \cite{chzhen2022minimax}, an interesting aspect is the ability to adjust the budget, which in turn affects the associated risk and unfairness. The risk measure (readapted to multivariate) is subsequently modified to incorporate these changes: 
%\begin{equation}\label{eq:OptRiskBudget}
%\quad\mathcal{R}_{\boldsymbol{w}}(g) = \sum_{s\in\mathcal{S}}w_s\mathbb{E}\left[\norm{Y - g(\boldsymbol{X}, S) }^2 | S=s\right]\enspace,
%\end{equation}
%where $\boldsymbol{w}:= (w_1, \dots, w_K)$ control the \textit{budget allocation}. The optimization in Eq.~\eqref{eq:OptRiskBudget} under DP-fairness constraint corresponds to $\min_{g\in\mathcal{G}}\sum_{s\in\mathcal{S}}w_s\mathcal{W}_2^2(\nu_{g^*|s}, \nu_{g})$. Note that the \textit{budget} stability is achieved when the weight vector $\boldsymbol{w} = (p_s)_{s\in\mathcal{S}}$. Any deviation from the mean allocated budget $\mathbb{E}[g^*(\boldsymbol{X}, S)]$ corresponds to a weight vector $\boldsymbol{w} \neq (p_s)_{s\in\mathcal{S}}$. In particular, we obeserve that a larger budget allocation for all tasks allows for $\sum_{s\in\mathcal{S}}(w_s-p_s)\mathbb{E}[g^*(\boldsymbol{X}, S)|S=s] \geq 0$.
%\end{rem}
Specifically, we are interested in evaluating the amount of information lost (risk, unfairness and budget) when constraining to the subclass $\mathcal{G}_\Theta$. We denote $\mathcal{G}_\Theta^0$ the class of DP-fair predictor in $\mathcal{G}_\Theta$ and define and quantify the information loss as bellow.

%\fran{note: output mean preservation instead of budget deviation ?}

%\paragraph*{Order preservation} A direct result of Prop.~\ref{prop:ParamFairPredictor} indicates that our post-processing approach preserves the rank statistics conditional on the sensitive feature \cite{van2000asymptotic, Bobkov_Ledoux16}.

\subsection{Parametric Exactly Fair Predictor}

% We then quantify t
The bound on the amount of information lost due to the class constraint at $\mathcal{G}_{\Theta}$ can be described as:
\begin{prop}[Exact parametric fair predictor] \label{prop:ParamFairPredictor}
Assume that A.~\ref{assu:general} hold, then, the excess-risk can be quantified by
    \begin{equation}\label{eq:ExcessRiskParam}
        \mathcal{E}(\mathcal{G}_\Theta^0) = \inf_{g_\theta \in \mathcal{G}_\Theta}\sum_{s\in\mathcal{S}}p_s \mathcal{W}_2^2\left(\nu_{g^*|s}, \nu_{g_\theta}\right)\enspace.
    \end{equation}
    In addition, if we denote $g_\theta^{(0)*}$ the minimizer of the r.h.s. of Eq.~\eqref{eq:ExcessRiskParam}, we can bound the excess-risk of $\mathcal{G}_\Theta^0$ and the budget deviation of $g_\theta^{(0)*}$ as follows:
    \begin{multline*}
        \mathcal{E}(\mathcal{G}^0)
        \leq \mathcal{E}(\mathcal{G}_\Theta^0)
        \leq 2\left(\mathcal{E}(\mathcal{G}^0) + \inf_{g_\theta \in \mathcal{G}_\Theta} \mathcal{W}_2^2(\nu_{g^{(0)*}}, \nu_{g_\theta})\right)
    \end{multline*}
        %\fran{calculer le parametric barycentre c'est équivalent à trouver la mesure dans le subclass qui minimize la W2 distance avec le vrai barycentre}
%    \item [ii)] (Budget deviation) The $L_2$-norm of the budget deviation of $h^{(\varepsilon)*}$ is also bounded by:
and,
$$ 0\leq\mathcal{D}(g_\theta^{(0)*})^2 \leq \mathcal{W}_2^2\left(\nu_{g^{(0)*}}, \nu_{g_\theta^{(0)*}}\right) \enspace.$$

%\end{enumerate}

\end{prop}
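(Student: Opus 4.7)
The plan is to tackle the three claims in sequence, leaning on the representation of $\mathcal{E}(\mathcal{G}^0)$ given in Eq.~\eqref{eq:FairExcessRisk} as the main reusable tool. For the identity~\eqref{eq:ExcessRiskParam}, I would start from the Bayes identity $\mathcal{R}(g)-\mathcal{R}^*=\mathbb{E}[(g^*(\boldsymbol{X},S)-g(\boldsymbol{X},S))^2]$, condition on $S$, and write the excess risk as $\sum_{s\in\mathcal{S}}p_s\,\mathbb{E}[(g^*(\boldsymbol{X},S)-g(\boldsymbol{X},S))^2\mid S=s]$. For any $g_\theta\in\mathcal{G}_\Theta$ the target marginal $\nu_{g_\theta}$ does not depend on $s$, so $g_\theta$ automatically lies in $\mathcal{G}^0$ and each conditional expectation is a coupling between $\nu_{g^*|s}$ and $\nu_{g_\theta}$, hence at least $\mathcal{W}_2^2(\nu_{g^*|s},\nu_{g_\theta})$. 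Non-atomicity (Assumption~\ref{assu:general}) guarantees that the monotone rearrangement $T_{g^*|s\to g_\theta}\circ g^*(\cdot,s)$ is a measurable function in $\mathcal{G}_\Theta$ that attains this lower bound, reproducing on a per-$\theta$ basis the argument used for Eq.~\eqref{eq:FairExcessRisk} and yielding the claimed equality together with the explicit form of the minimizer $g_\theta^{(0)*}$.

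For the two-sided bound, the left inequality $\mathcal{E}(\mathcal{G}^0)\leq\mathcal{E}(\mathcal{G}_\Theta^0)$ is immediate from the inclusion $\mathcal{G}_\Theta^0\subset\mathcal{G}^0$. For the right inequality, I would apply the triangle inequality for $\mathcal{W}_2$ followed by the elementary $(a+b)^2\leq 2a^2+2b^2$:
$$\mathcal{W}_2^2(\nu_{g^*|s},\nu_{g_\theta})\leq 2\,\mathcal{W}_2^2(\nu_{g^*|s},\nu_{g^{(0)*}})+2\,\mathcal{W}_2^2(\nu_{g^{(0)*}},\nu_{g_\theta}).$$
Multiplying by $p_s$, summing over $s$, and identifying the first term as $2\,\mathcal{E}(\mathcal{G}^0)$ via Eq.~\eqref{eq:FairExcessRisk} (evaluated at the minimizer $g^{(0)*}$), the second term becomes independent of $s$, and taking $\inf_{g_\theta\in\mathcal{G}_\Theta}$ on both sides delivers the stated bound.

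For the budget deviation, I would use the barycenter mean-preservation recalled earlier, $\mathbb{E}[g^{(0)*}(\boldsymbol{X},S)]=\mathbb{E}[g^*(\boldsymbol{X},S)]$, to rewrite $\mathcal{D}(g_\theta^{(0)*})=\mathbb{E}[g_\theta^{(0)*}(\boldsymbol{X},S)]-\mathbb{E}[g^{(0)*}(\boldsymbol{X},S)]$. Taking an optimal coupling $\pi^\star$ between $\nu_{g^{(0)*}}$ and $\nu_{g_\theta^{(0)*}}$, the marginal expectations are preserved under $\pi^\star$, so this difference equals $\mathbb{E}_{\pi^\star}[V-U]$; Jensen's inequality then gives $\mathcal{D}(g_\theta^{(0)*})^2\leq\mathbb{E}_{\pi^\star}[(V-U)^2]=\mathcal{W}_2^2(\nu_{g^{(0)*}},\nu_{g_\theta^{(0)*}})$, while $\mathcal{D}(g_\theta^{(0)*})^2\geq 0$ is trivial. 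I expect the bound~(2) and the budget inequality~(3) to be essentially one-line consequences of the triangle inequality and of Jensen respectively; the only delicate step is making the measurability/pushforward argument for~\eqref{eq:ExcessRiskParam} rigorous so that the conditional optimal transport maps assemble into a bona fide element of $\mathcal{G}_\Theta^0$, which is where Assumption~\ref{assu:general} is crucial.
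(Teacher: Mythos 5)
Your proposal is correct and follows essentially the same route as the paper: the identity comes from the characterization of the excess risk as a weighted sum of squared $\mathcal{W}_2$ distances (the paper invokes an adapted lemma from \cite{gouic2020projection} where you unpack the coupling/monotone-rearrangement argument directly), the two-sided bound is the same triangle inequality plus $(a+b)^2\le 2a^2+2b^2$ step, and the budget bound is the same Jensen argument after using mean preservation of the barycenter. One small point in your favour: by phrasing the budget step through the optimal coupling $\pi^\star$ explicitly, you avoid the slight imprecision in the paper's chain $\mathcal{D}(g_\theta)^2\le\mathbb{E}[(g_\theta-g^{(0)*})^2]\le\mathcal{W}_2^2(\nu_{g^{(0)*}},\nu_{g_\theta})$, whose second inequality only reads correctly when the expectation is understood under the optimal coupling.
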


Prop.~\ref{prop:ParamFairPredictor} indicates that within a subclass $\mathcal{G}_{\Theta}$, information loss is partially controlled by the minimum 2-Wasserstein distance to the true Wasserstein barycenter $g^{(0)*}$. However, a direct computation of the constrained Wasserstein barycenter $\mathcal{E}(\mathcal{G}_\Theta^0)$ is prohibitively complex and we instead propose an adequate approximation within the subclass $\mathcal{G}_{\Theta}$. Note that, for any $g_\theta \in \mathcal{G}_\Theta$, we obtain:
%\begin{align*}
%    \mathcal{W}^2_2\left( \nu_{g_\theta}, \nu_{g^*|s} \right) &\leq 2 \mathcal{W}^2_2\left( \nu_{g_\theta}, \nu_{g^{(0)*}} \right) + 2 \mathcal{W}^2_2\left( \nu_{g^{(0)*}}, \nu_{g^*|s} \right)\enspace.
%\end{align*}
%Therefore,
\begin{multline*}
        \sum_{s\in\mathcal{S}}p_s\mathcal{W}^2_2\left( \nu_{g_\theta}, \nu_{g^*|s} \right) \leq 2 \left(\mathcal{W}^2_2\left( \nu_{g_\theta}, \nu_{g^{(0)*}} \right) +  \mathcal{E}(\mathcal{G}^0)\right)\enspace.
\end{multline*}
If we assume the set $\boldsymbol{X} \times S$ is compact (therefore bounded), especially if the diameter verifies $\textrm{diam}(\boldsymbol{X} \times S) \leq 1$, we have:
\begin{multline*}
        \sum_{s\in\mathcal{S}}p_s\mathcal{W}^2_2\left( \nu_{g_\theta}, \nu_{g^*|s} \right) \leq 2 \left(\mathcal{W}_1\left( \nu_{g_\theta}, \nu_{g^{(0)*}} \right) + \mathcal{E}(\mathcal{G}^0)\right)\enspace,
\end{multline*}
which holds true when a simple normalization step is applied, scaling every feature within the range of $[0, 1]$. These upper bounds suggest that the best parametric fair predictor $g_\theta^{(0)*}$, considering risk, unfairness and budget, can be approximated within the subclass $\mathcal{G}_{\Theta}$ by minimizing the 2-Wasserstein (or 1-Wasserstein) distance to the actual Wasserstein barycenter $g^{(0)*}$.

%.............In our parametric projection, we can adapt Prop.~\ref{prop:ParamFairPredictor} and steps proposed in Rm.~\ref{rem:steps} to account for the budget. To do so, we replace the Wasserstein barycenter $g^{(0)*}$ with the $\boldsymbol{w}$-weighted Wasserstein barycenter in the new steps outlined in Rm.~\ref{rem:steps}.

\subsection{Extension To Approximate Fairness}
\label{subsec:approx_fairness}
The approximate framework aims to achieve \textit{approximate} fairness by finding an optimal $\varepsilon$-RI fair predictor, minimizing $\inf_{g\in\mathcal{G}^\varepsilon} \mathcal{R}(g)$ for $\varepsilon\in [0, 1]$. Extending Prop.~\ref{prop:ParamFairPredictor} to this end, we show that a solution using the geodesic approach can map any exact fair predictor (including $g^{(0)*}$) to an approximate one in $\mathcal{G}^\varepsilon$. This is achieved by introducing the geodesic paths in 2-Wasserstein.

%\fran{TODO: montrer que c'est bien géodésique (ou citer un papier)}

% We utilize this geodesic curve to approximate a fair predictor between any DP-constrained predictor $g^{(0)}$ and the unconstrained optimal predictor $g^*$, following the \textit{Geometric Repair} approach \cite{feldman2015certifying, gordaliza2019obtaining}. See Fig.~\ref{fig:univariate_gaussian_epsfair} for illustration.

\paragraph{Geodesic Interpolation} A curve of probability measures $(\nu_{\varepsilon})_{\varepsilon\in[0, 1]}$ is called a (constant-speed) geodesic in the 2-Wasserstein space (\cite{ambrosio2005gradient} \S2.4.3) if
$$
\mathcal{W}_2(\nu_{\varepsilon}, \nu_0) = \varepsilon \cdot \mathcal{W}_2(\nu_{1}, \nu_0),\quad \varepsilon\in[0, 1]\enspace.
$$
In particular, if we denote $T_{\nu_0\to \nu_1}$ the %(unique) 
optimal mapping from $\nu_0$ to $\nu_1$ then the corresponding %(unique) 
geodesic curve is
$$
\nu_{\varepsilon} = \left( (1-\varepsilon)\cdot Id + \varepsilon\cdot T_{\nu_0\to \nu_1} \right)\sharp \nu_0,\quad \varepsilon\in[0, 1]\enspace.
$$
Note that this geodesic curve is unique in the 2-Wasserstein space (\cite{kloeckner2010geometric}, \S 2.2).

We use the geodesic curve to approximate appropriately a fair predictor based on an exact fair one. More specifically, we consider the geodesic paths~\cite{villani2003topics, santambrogio2015optimal} $(g^{(\varepsilon)})_{\varepsilon\in[0, 1]}$ in 2-Wasserstein space between \textbf{any} DP-constrained predictor $g^{(0)}\in \mathcal{G}^{(0)}$ and the unconstrained optimal predictor $g^*$,
%\fran{TODO: Montrer que cette écriture a aussi du sens en multivarié ?}
\begin{equation}\label{eq:EpsFairPredTh}
g^{(\varepsilon)}(\boldsymbol{X}, S) = (1-\varepsilon)\cdot g^{(0)}(\boldsymbol{X}, S) + \varepsilon \cdot g^{*}(\boldsymbol{X}, S)\enspace.
\end{equation}
This approach in Algorithmic Fairness is also known as \textit{Geometric Repair} \cite{feldman2015certifying, gordaliza2019obtaining}. See Fig.~\ref{fig:univariate_gaussian_epsfair} for an illustration of geodesic paths. This expression allows us to derive directly the following Lemma:
\begin{lem}[Risk-unfairness trade-off]\label{lem:RiskUnf} Given $\varepsilon\in[0, 1]$ and any predictor $g^{(0)}\in \mathcal{G}^{(0)}$, $g^{(\varepsilon)}$ satisfies,
    $$
    %\mathcal{E}(\mathcal{G}^\varepsilon) = ...
    \mathcal{R}(g^{(\varepsilon)}) = (1-\varepsilon)^2\times \mathcal{R}(g^{(0)})
    \quad\text{and}\quad
    \mathcal{U}(g^{(\varepsilon)}) = \varepsilon\times \mathcal{U}(g^{*})
    $$
\end{lem}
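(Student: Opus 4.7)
The plan is to treat the risk and unfairness identities separately, both leveraging the linear-in-$\varepsilon$ form $g^{(\varepsilon)} = (1-\varepsilon)g^{(0)} + \varepsilon g^*$ from Eq.~\eqref{eq:EpsFairPredTh}.

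For the risk identity, the first step is to rewrite $Y - g^{(\varepsilon)} = (Y - g^*) + (1-\varepsilon)(g^* - g^{(0)})$, expand the square and take expectations. The cross term vanishes by the Bayes property $\mathbb{E}[Y - g^*(\boldsymbol{X}, S) \mid \boldsymbol{X}, S] = 0$, yielding
$\mathcal{R}(g^{(\varepsilon)}) = \mathcal{R}^* + (1-\varepsilon)^2\, \mathbb{E}[(g^*(\boldsymbol{X},S) - g^{(0)}(\boldsymbol{X},S))^2]$.
Applying the same identity with $\varepsilon = 0$ identifies $\mathbb{E}[(g^* - g^{(0)})^2] = \mathcal{R}(g^{(0)}) - \mathcal{R}^*$, giving the excess-risk scaling $\mathcal{R}(g^{(\varepsilon)}) - \mathcal{R}^* = (1-\varepsilon)^2\,(\mathcal{R}(g^{(0)}) - \mathcal{R}^*)$; I read the stated identity as implicitly about excess risk (otherwise $\varepsilon=1$ would give $\mathcal{R}(g^*) = 0$, which fails in general).

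For the unfairness identity, the strategy is to pass to a quantile representation. Let $U := F_{g^*|S}(g^*(\boldsymbol{X}, S))$. Under Assumption~\ref{assu:general}, $U$ is uniform on $[0,1]$ conditional on $S$, hence marginally uniform and independent of $S$. Because $g^{(0)}$ is exactly fair, $\nu_{g^{(0)}|s} = \nu_{g^{(0)}}$ for every $s$, and the canonical comonotone coupling between $g^{(0)}$ and $g^*$ on each stratum gives $g^{(0)}(\boldsymbol{X}, S) = Q_{g^{(0)}}(U)$, so
\[
g^{(\varepsilon)}(\boldsymbol{X}, S) = (1-\varepsilon)\,Q_{g^{(0)}}(U) + \varepsilon\,Q_{g^*|S}(U).
\]
Since this is increasing in $U$ at fixed $S=s$, the conditional quantile reads $Q_{g^{(\varepsilon)}|s}(u) = (1-\varepsilon)\,Q_{g^{(0)}}(u) + \varepsilon\,Q_{g^*|s}(u)$. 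Substituting into $\mathcal{W}_1(\nu, \nu') = \int_0^1 |Q_\nu(u) - Q_{\nu'}(u)|\,du$ factors out $\varepsilon$ between each pair of conditionals; propagating this scaling to the marginal-conditional pair then gives $\mathcal{W}_1(\nu_{g^{(\varepsilon)}}, \nu_{g^{(\varepsilon)}|s}) = \varepsilon\,\mathcal{W}_1(\nu_{g^*}, \nu_{g^*|s})$, and taking the maximum over $s \in \mathcal{S}$ delivers $\mathcal{U}(g^{(\varepsilon)}) = \varepsilon\,\mathcal{U}(g^*)$.

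The main obstacle is precisely this last propagation step: the marginal $\nu_{g^{(\varepsilon)}}$ is the mixture $\sum_{s'} p_{s'}\, \nu_{g^{(\varepsilon)}|s'}$, and a quantile of a mixture is not a mixture of quantiles, so the conditional identity cannot be applied naively. A direct coupling through $(U, S)$ with $U \perp S$ only yields the upper bound $\varepsilon \sum_{s'} p_{s'}\,\mathcal{W}_1(\nu_{g^*|s'}, \nu_{g^*|s})$, which is in general strictly larger than $\varepsilon\,\mathcal{W}_1(\nu_{g^*}, \nu_{g^*|s})$ by convexity of $\mathcal{W}_1$. Tightening this will likely proceed at the CDF level via $\mathcal{W}_1(\nu, \nu') = \int_{\mathbb{R}} |F_\nu - F_{\nu'}|\,dt$: each $F_{g^{(\varepsilon)}|s'}$ is a pushforward of the common $\nu_{g^{(0)}}$ along a monotone McCann interpolant, and one hopes to show that the signs of $F_{g^{(\varepsilon)}|s'} - F_{g^{(\varepsilon)}|s}$ in $t$ are governed by the $\varepsilon=1$ configuration, letting the absolute value pass through the mixture and the $\varepsilon$-factor propagate intact.
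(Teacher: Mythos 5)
Your treatment of the risk identity is correct and in fact more careful than the paper's. The paper's own proof writes $\mathcal{R}(g^{(\varepsilon)})=\mathbb{E}[((1-\varepsilon)g^{(0)}+\varepsilon g^*-g^*)^2]$, i.e.\ it silently substitutes $\mathbb{E}[(g-g^*)^2]$ for the risk of Eq.~\eqref{eq:OptRisk}; as you observe, with the stated definition $\mathcal{R}(g)=\mathbb{E}[(Y-g)^2]$ the claim fails at $\varepsilon=1$ unless $Y=g^*(\boldsymbol{X},S)$ a.s. Your Pythagorean decomposition, which turns the statement into $\mathcal{R}(g^{(\varepsilon)})-\mathcal{R}^*=(1-\varepsilon)^2(\mathcal{R}(g^{(0)})-\mathcal{R}^*)$, is the reading under which the identity is true and is evidently what the authors intend.

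On the unfairness identity you have put your finger on a genuine gap --- and it is a gap in the paper's proof as well, not only in yours. The paper's entire argument is the single chain $\mathcal{U}(g^{(\varepsilon)})=\max_s\mathcal{W}_1(\nu_{g^{(\varepsilon)}|s},\nu_{g^{(\varepsilon)}})=\max_s\varepsilon\,\mathcal{W}_1(\nu_{g^*|s},\nu_{g^*})$, which asserts exactly the per-group scaling that your mixture objection shows does not follow from the conditional-quantile computation. Worse, the per-group identity is simply false for $K\geq 3$: take $p_s=1/3$ and $\nu_{g^*|s}$ uniform on $[0,1]$, $[10,11]$, $[20,21]$; the barycenter is $U[10,11]$, the interpolated conditionals are unit-length uniforms started at $10-10\varepsilon$, $10$, $10+10\varepsilon$, and a direct CDF computation gives $\mathcal{W}_1(\nu_{g^{(\varepsilon)}|2},\nu_{g^{(\varepsilon)}})=\tfrac{200}{3}\varepsilon^2$ for $\varepsilon\leq\tfrac{1}{20}$, while $\varepsilon\,\mathcal{W}_1(\nu_{g^*|2},\nu_{g^*})=\tfrac{13}{2}\varepsilon$ --- quadratic versus linear. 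The maximum over $s$ is rescued here because it is attained at the outer groups, for which $F_{g^{(\varepsilon)}|s}-F_{g^{(\varepsilon)}|s'}$ has constant sign for every $s'$ and every $\varepsilon$, so the absolute value passes through the mixture exactly as your CDF strategy hopes. What remains to be proved --- by you and by the paper --- is that the maximizing group always enjoys such an $\varepsilon$-stable sign pattern; without that, only $\mathcal{U}(g^{(\varepsilon)})\leq\varepsilon\sum_{s'}p_{s'}\max_s\mathcal{W}_1(\nu_{g^*|s},\nu_{g^*|s'})$-type bounds are available. Two further implicit hypotheses you share with the paper are worth flagging: the conditional scaling $Q_{g^{(\varepsilon)}|s}=(1-\varepsilon)Q_{g^{(0)}}+\varepsilon Q_{g^*|s}$ requires $g^{(0)}(\boldsymbol{X},S)$ and $g^*(\boldsymbol{X},S)$ to be comonotone given $S$, which holds for the barycenter construction of Eq.~\eqref{eq:FairPredTh} but not for an arbitrary element of $\mathcal{G}^0$, and the "$+0$" in the paper's chain uses $\mathcal{U}(g^{(0)})=0$ in the same tacit way. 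In short: your proposal is incomplete, but it is incomplete at precisely the step the paper skips, and it correctly diagnoses why that step needs an argument.
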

If we replace $g^{(0)}$ with $g^{(0)*}$ in Eq.~\eqref{eq:EpsFairPredTh}, then the results in Lemma~\ref{lem:RiskUnf} hold and we denote the result as $g^{(\varepsilon)*}$, where $\varepsilon$ controls the distance to the Wasserstein barycenter $g^{(0)*}$. Notably, for any $g^{(0)}\in\mathcal{G}^0$, $\mathcal{R}(g^{(\varepsilon)*})\leq \mathcal{R}(g^{(\varepsilon)})$ while having the same level of unfairness. 
Moreover, as per \cite{chzhen2022minimax} (Prop. 4.1), $g^{(\varepsilon)*}$ represents the optimal fair predictor with $\varepsilon$-RI, minimizing the risk
$\inf_{g\in\mathcal{G}^\varepsilon} \mathcal{R}(g)$.
\begin{figure}[h!]
\centering
\includegraphics[width=0.45\textwidth]{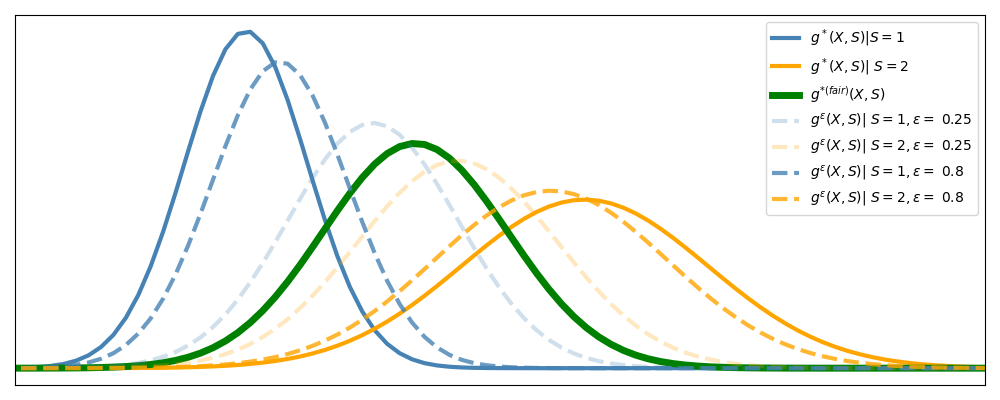}
\caption{Approximate fairness between two gaussian distributions and their barycenter} \label{fig:univariate_gaussian_epsfair}
\end{figure}
%Especially, in the particular case of  .... 

For any $\varepsilon\in[0, 1]$, $g^{(\varepsilon)*}$ exhibits budget stability: $\mathcal{D}(g^{(\varepsilon)*}) = \mathcal{D}(g^{(0)*}) = 0$, which implies that the $(g^{(\varepsilon)*})_{\varepsilon}$ curve adheres to the initial allocation budget.

\paragraph*{Parametric Case} In line with the previous approach, we consider $(h^{(\varepsilon)})_{\varepsilon\in[0, 1]}$ the geodesics between any parametric DP-fair predictor $g^{(0)}_\theta\in\mathcal{G}_\Theta^0$ and $g^*$ defined as
\begin{equation}\label{eq:EpsParamFairPredTh}
h^{(\varepsilon)}(\boldsymbol{X}, S) := (1-\varepsilon)\cdot g_\theta^{(0)}(\boldsymbol{X}, S) + \varepsilon \cdot g^{*}(\boldsymbol{X}, S)\enspace,
\end{equation}
which corresponds to a $\varepsilon$-RI predictor within a subclass of $\mathcal{G}^\varepsilon$, denoted 
$$
\mathcal{H}^\varepsilon := \left\{ h^{(\varepsilon)} \in \mathcal{G}^\varepsilon :  g_\theta^{(0)}\in\mathcal{G}_\Theta^0 \ s.t. \ h^{(\varepsilon)} \text{ verifies Eq.~\eqref{eq:EpsParamFairPredTh}} \right\}\enspace,
$$
with $\mathcal{H}^0 = \mathcal{G}_\Theta^0$ as a parametric subclass and $\mathcal{H}^1 = \mathcal{G}$ as a non-parametric subclass. The following proposition establishes an upper bound on the information loss caused by imposing Eq.~\eqref{eq:EpsParamFairPredTh}.

\begin{prop}[Approximate parametric fairness] \label{prop:ParamFairPredictorApprox}
Assume that A.~\ref{assu:general} hold, then,
\begin{enumerate}%(Approximate parametric fairness) 
    \item [i)] (Risk-unfairness trade-off) Given $\varepsilon\in[0, 1]$ and any $g_\theta^{(0)} \in\mathcal{G}^0_\Theta$, we have
    $$
    %\mathcal{E}(\mathcal{G}^\varepsilon) = ...
    \mathcal{R}(h^{(\varepsilon)}) = (1-\varepsilon)^2\times \mathcal{R}(g_\theta^{(0)})\enspace,
    $$
 and
 $$
    \mathcal{U}(h^{(\varepsilon)}) = \varepsilon\times \mathcal{U}(g^{*})\enspace.
    $$
Replacing $g_\theta^{(0)}$ with $g_\theta^{(0)*}$ in Eq.~\eqref{eq:EpsParamFairPredTh} yields the optimal predictor, denoted $h^{(\varepsilon)*}$, in $\mathcal{H}^\varepsilon$. Notably, $h^{(\varepsilon)*}$ is the best risk-optimal choice among all interpolated models of Eq.~\eqref{eq:EpsParamFairPredTh} with equal unfairness levels.
    \item [ii)] (Upper-bounded excess-risk)
Additionally, the resulting excess-risk can be bounded by:
\begin{multline*}
    \mathcal{E}(\mathcal{G}^\varepsilon) \leq \mathcal{E}(\mathcal{H}^\varepsilon) \leq \\
     2(1-\varepsilon)^2\big(\mathcal{E}(\mathcal{G}^0) + \inf_{g_\theta \in \mathcal{G}_\Theta} \mathcal{W}_2^2(\nu_{g^{(0)*}}, \nu_{g_\theta}) \big)\enspace.
\end{multline*}
\item[iii)] (Bound on budget deviation) The squared budget deviation of $h^{(\varepsilon)*}$ is bounded by:
\begin{multline*}
    \mathcal{D}(h^{(\varepsilon)*})^2 \leq (1-\varepsilon)^2\cdot \mathcal{D}(g_\theta^{(0)*})^2\\
    \leq (1-\varepsilon)^2\cdot\mathcal{W}_2^2\left(\nu_{g^{(0)*}}, \nu_{g_\theta^{(0)*}}\right) \enspace.
\end{multline*}

\end{enumerate}

\end{prop}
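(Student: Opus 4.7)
The proposition bundles three statements that all reduce, modulo some bookkeeping, to Lemma~\ref{lem:RiskUnf}, Proposition~\ref{prop:ParamFairPredictor}, and the Bayes orthogonality $\mathbb{E}[(Y-g^*)\phi(\boldsymbol{X},S)]=0$ valid for any square-integrable $\phi$. The plan is to treat each of the three items in turn, with (ii) requiring a mild additional manipulation.

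For part (i), the key observation is that $g_\theta^{(0)}\in\mathcal{G}_\Theta^0\subset\mathcal{G}^0$, so the geodesic $h^{(\varepsilon)}$ defined by Eq.~\eqref{eq:EpsParamFairPredTh} is precisely an instance of the path handled by Lemma~\ref{lem:RiskUnf} (with the choice $g^{(0)}=g_\theta^{(0)}$). The risk and unfairness identities therefore transfer verbatim, giving $\mathcal{R}(h^{(\varepsilon)}) = (1-\varepsilon)^2 \mathcal{R}(g_\theta^{(0)})$ and $\mathcal{U}(h^{(\varepsilon)})=\varepsilon\,\mathcal{U}(g^*)$. The optimality statement then follows because, for fixed $\varepsilon$, the unfairness is invariant under the choice of the base predictor in $\mathcal{G}_\Theta^0$, while the risk of $h^{(\varepsilon)}$ is a monotone function of $\mathcal{R}(g_\theta^{(0)})$. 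Minimization over $\mathcal{H}^\varepsilon$ therefore reduces to minimization of the risk over $\mathcal{G}_\Theta^0$, whose minimizer is $g_\theta^{(0)*}$ by Proposition~\ref{prop:ParamFairPredictor}.

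For part (ii), the lower bound is immediate from the inclusion $\mathcal{H}^\varepsilon \subset \mathcal{G}^\varepsilon$, so $\inf_{\mathcal{H}^\varepsilon}\mathcal{R}\geq \inf_{\mathcal{G}^\varepsilon}\mathcal{R}$. For the upper bound, I would evaluate the risk at the feasible candidate $h^{(\varepsilon)*}\in\mathcal{H}^\varepsilon$ and apply Bayes orthogonality: for any predictor $f$, $\mathcal{R}(f) - \mathcal{R}^* = \mathbb{E}(g^* - f)^2$. Since $g^* - h^{(\varepsilon)*} = (1-\varepsilon)(g^* - g_\theta^{(0)*})$, this yields $\mathcal{R}(h^{(\varepsilon)*}) - \mathcal{R}^* = (1-\varepsilon)^2(\mathcal{R}(g_\theta^{(0)*}) - \mathcal{R}^*) = (1-\varepsilon)^2 \mathcal{E}(\mathcal{G}_\Theta^0)$. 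Plugging in the excess-risk upper bound from Proposition~\ref{prop:ParamFairPredictor} closes the argument.

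Part (iii) is the easiest: linearity of expectation applied to Eq.~\eqref{eq:EpsParamFairPredTh} gives $\mathcal{D}(h^{(\varepsilon)*}) = \mathbb{E}[h^{(\varepsilon)*} - g^*] = (1-\varepsilon)\,\mathcal{D}(g_\theta^{(0)*})$, so squaring yields the first inequality (in fact with equality), and the second inequality is exactly the budget-deviation bound of Proposition~\ref{prop:ParamFairPredictor}. The principal subtlety in the whole proof lies in (ii): one must interpret Lemma~\ref{lem:RiskUnf} at the level of \emph{excess} risk via Bayes orthogonality, because it is only the excess risk of a geodesic that factors out as $(1-\varepsilon)^2$ times that of the endpoint; once this conversion is made, everything else is bookkeeping on top of Proposition~\ref{prop:ParamFairPredictor} together with the feasible-set inclusions.
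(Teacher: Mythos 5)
Your proof is correct and rests on the same two pillars as the paper's own argument — the interpolation identity of Lemma~\ref{lem:RiskUnf} for part (i), and Proposition~\ref{prop:ParamFairPredictor} for the bounds in (ii) and (iii) — but you compose them in a slightly different order. For (ii), the paper applies the Minkowski inequality to $\mathcal{W}_2^2(\nu_{g^*|s}, \nu_{h^{(\varepsilon)}|s})$ through the intermediate measure $\nu_{g^{(\varepsilon)*}|s}$ and then identifies each resulting term; you instead evaluate the excess risk at the feasible point $h^{(\varepsilon)*}\in\mathcal{H}^\varepsilon$, factor out $(1-\varepsilon)^2$ via Bayes orthogonality, and only then invoke Proposition~\ref{prop:ParamFairPredictor} at $\varepsilon=0$. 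Your route is cleaner, passes through the sharper intermediate bound $\mathcal{E}(\mathcal{H}^\varepsilon)\leq(1-\varepsilon)^2\,\mathcal{E}(\mathcal{G}_\Theta^0)$, and your remark that the $(1-\varepsilon)^2$ scaling genuinely concerns the \emph{excess} risk $\mathbb{E}(g^*-g)^2$ rather than $\mathbb{E}(Y-g)^2$ correctly pinpoints an abuse of notation that the paper's computation of Lemma~\ref{lem:RiskUnf} silently commits. For (iii), you obtain $\mathcal{D}(h^{(\varepsilon)*})=(1-\varepsilon)\,\mathcal{D}(g_\theta^{(0)*})$ by plain linearity of the expectation applied to Eq.~\eqref{eq:EpsParamFairPredTh} (so the first stated inequality is in fact an equality), whereas the paper re-runs a Jensen-plus-Wasserstein argument at the interpolated level; both then close with the budget bound of Proposition~\ref{prop:ParamFairPredictor}. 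I see no gaps.
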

%Together with $g^{(0)*}_\theta$, the geodesic paths outlined in Eq.~\eqref{eq:EpsParamFairPredTh} then corresponds to the optimal predictor within $\mathcal{H}^\varepsilon$. 
Similarly to Prop.~\ref{prop:ParamFairPredictor}, the bound in Prop.~\ref{prop:ParamFairPredictorApprox}-\textit{ii)} suggests that the information loss is partially controlled by $\inf_{g_\theta \in \mathcal{G}_\Theta} \mathcal{W}_2^2(\nu_{g^{(0)*}}, \nu_{g_\theta})$, and its minimizer, denoted $\Tilde{g_\theta}$, can serve as a good approximation of $g^{(0)*}_\theta$. Further Prop.~\ref{prop:ParamFairPredictorApprox}-\textit{iii)} shows that budget stability is maintained with a suitably chosen distribution family $\mathcal{G}_\Theta$, close in "shape" distribution to the barycenter $g^{(0)*}$.

To improve our approach beyond the naive method, we then propose a simple three-step estimation procedure. We sequentially construct the predictors $\left(g^{(0)*}, \Tilde{g_\theta}, \Tilde{h}^{(\varepsilon)}\right)$. Firstly, we construct the optimal fair regressor $g^{(0)*}$ via the Wasserstein barycenter. Next, we compute the parametric fair regressor $\Tilde{g_\theta}$ as the minimizer of the Wasserstein distance to the true barycenter $g^{(0)*}$.
Finally, through geodesic interpolation using $\Tilde{g_\theta}$, we determine the approximately fair predictor 
$$
\Tilde{h}^{(\varepsilon)}(\boldsymbol{X}, S) = (1-\varepsilon)\cdot \Tilde{g_\theta}(\boldsymbol{X}, S) + \varepsilon \cdot g^{*}(\boldsymbol{X}, S)\enspace.
$$
Note that, although partially parameterized by $\theta$, both $h^{(\varepsilon)*}$ and $\Tilde{h}^{(\varepsilon)}$ are not necessarily members of the parametric class $\mathcal{G}_\Theta$, contrary to $h^{(0)*} = g^{(0)*}_\theta$ and $\Tilde{h}^{(0)} = \Tilde{g_\theta}$.

%Note that, while $h^{(\varepsilon)*}$ and $\Tilde{h}^{(\varepsilon)}$ depend  on $\theta$, they may not belong to the parametric class $\mathcal{G}_\Theta$, unlike $h^{(0)*} = g^{(0)*}_\theta$ and $\Tilde{h}^{(0)} = \Tilde{g_\theta}$.
%Therefore, compared to the naive approach, we propose using an estimation procedure in three steps. 
%We sequentially construct the predictors $\left(g^{(0)*}, \Tilde{g_\theta}, h^{(\varepsilon)*}\right) $. That is, we compute the optimal fair regressor $g^{(0)*}$ with Wasserstein barycenter, then the  parametric fair regressor $\Tilde{g_\theta}$ based on $g^{(0)*}$, finally we can find the \textcolor{red}{optimal} approximately fair regressor $h^{(\varepsilon)*}$ based on $\Tilde{g_\theta}$ via geodesic interpolation. However, it is worth noting, that although partially parameterized by $\theta$, $h^{(\varepsilon)*}$ is not necessarily a member of the parametric class $\mathcal{G}_\Theta$.

\section{DATA-DRIVEN PROCEDURE}
\label{sec:datadriven}

This section proposes a plug-in estimator for our methodology using empirical data. The construction details are in Section~\ref{subsec:plugin}, and its statistical properties are discussed in Section~\ref{subsec:guarantee}.

\subsection{Plug-in Estimator}
\label{subsec:plugin}

In line with previous research, we start from the unconstrained optimal estimator $\hat{g}$ of $g^*$ trained on a training data set, and an unlabeled calibration set $\mathcal{D}_N^{\text{calib}} = (\boldsymbol{X}_i, S_i)_{i=1}^{N}$ i.i.d. copies of $(\boldsymbol{X}, S)$. Both $\hat{g}$ and $\mathcal{D}_N^{\text{calib}}$ are then used to compute the empirical counterpart $\hat{g}^{(0)}$ of the optimal fair predictor $g^{(0)*}$ defined in Eq.~\eqref{eq:FairPredTh}, following the methodology outlined in \cite{Chzhen_Denis_Hebiri_Oneto_Pontil20Wasser, gaucher2023fair}. In addition, a set of parameters $\theta$ to be estimated is required, which are estimated using the minimum expected distance.

%\subsection{Minimum Expected Distance Estimation}
\paragraph*{Minimum Expected Wasserstein Estimation}

%To find the parameter that is associated with the Wasserstein barycenter distribution, we can use the results from \cite{bernton2019parameter}. They show that under mild conditions, the MEWE exists and is consistent. That is, for the true distribution, specified by $\mu_{\ast}$, the empirical distribution $\mu_n$ and the model distribution $\mu_{\theta}$, the minimum of $\theta \mapsto \mathcal{W}_p(\mu_n, \mu_\theta)$ converges to the minimum of $\theta \mapsto \mathcal{W}_p(\mu_{\ast}, \mu_\theta)$.
To find the parameter associated with the Wasserstein barycenter distribution, we can use the results from \cite{bernton2019parameter}. They show that under mild conditions, the MEWE exists and is consistent. That is, for the true distribution, here denoted $\nu_{\ast}$, the empirical distribution $\nu_n$ and the model distribution $\nu_{g_\theta}$, the minimum of $\theta \mapsto \mathcal{W}_p(\nu_n, \nu_{g_\theta})$ converges to the minimum of $\theta \mapsto \mathcal{W}_p(\nu_{\ast}, \nu_{g_\theta})$.

Crucially, they show under model misspecifiction that the MEWE does not necessarily converge to the same parameter as the maximum likelihood approach. Given that we want the perturbations introduced by the parametric form to be minimal with respect to the transport metric, which seems like a desirable property.

%We make use of recent results in minimum (expected) distance estimation, using the Wasserstein distance \cite{bernton2019parameter,cuturi_param}. Under mild conditions, \cite{bernton2019parameter} show that an estimator exists and is consistent.. (Theorem 2.4).
%$$\mathcal{W}_p(\nu_{g_\theta})$$
%..contrary to other estimation techniques, such as maximum-likelihood based predictions, the MEWE is shown to be consistent under model misspecification, making it particularly attractive for applications where the shape is imposed rather than...

\begin{algorithm}
   \caption{Parametric fair predictor}
   \label{alg:optimization}
\begin{algorithmic}
   \STATE {\bfseries Input:} base estimator $\hat{g}$, unlabeled sample $\mathcal{D}^{\text{calib}}_N$, new data point $(\boldsymbol{x}, s)$, parameter to be estimated $\theta$.%, and \emph{i.i.d} uniform perturbations $(\zeta_{i}^{s})_{i,s}$

   \STATE {\bf \quad Step 0.} Based on $\mathcal{D}^{\text{calib}}_N$ and $\Hat{g}$, compute the empirical counterpart of $\{p_s\}_s$, $F_{g|s}$ and $Q_{g|s}$.

   \STATE {\bf \quad Step 1.} Then compute the empirical version $\Hat{g}^{(0)}$ of Eq.~\eqref{eq:EpsFairPredTh};

   \STATE {\bf \quad Step 2.} Estimate $\hat{\theta}$ using the appropriate distance metric and parametric form. Sample from resulting distribution and create mapping function between $\Hat{g}^{(0)}$ and $\Hat{g}_\theta$;

   \STATE {\bf \quad Step 3.} Use geodesic interpolation to get $\hat{h}^{(\varepsilon)}$:
   $$
   \hat{h}^{(\varepsilon)}(\boldsymbol{x},s) = (1-\varepsilon)\cdot \Hat{g}_\theta(\boldsymbol{x},s) + \varepsilon \cdot \Hat{g}(\boldsymbol{x},s)\enspace;
   $$
   
   \STATE {\bfseries Output:} parametric approximately fair predictors $\hat{h}^{(\varepsilon)}(\boldsymbol{x},s)$ at point $(\boldsymbol{x},s)$.

\end{algorithmic}
\end{algorithm}

% In line with previous research, we start from the unconstrained optimal estimator $\hat{g}$ of $g^*$ trained on a training data set, and an unlabeled calibration set $\mathcal{D}_N^{\text{calib}} = (\boldsymbol{X}_i, S_i, \cdot)_{1\leq i\leq n}$ i.i.d. copies of $(\boldsymbol{X}, S, Y)$. In addition, a set of parameters $\theta$ to be estimated is required. Depending on the distance metric used, the procedure presented in Algorithm \ref{alg:optimization} differs slightly and we describe the procedure in detail below. 

%    \item To estimate the optimal fair regressor $\hat{g}^{(0)}$, one can proceed via the quantile approach, as outlined in for example \cite{gaucher2023fair} in the univariate case. In the multivariate case, either the complete linear programming approach \cite{bonneel2011displacement} or a penalized version thereof \cite{benamou2015iterative} can be used. 
%    \item On top of the estimator for the barycenter, we then fit the parametric minimum Wasserstein distance estimator using Monte Carlo Expectation Maximization (MCEM), see \cite{bernton2019parameter, wei1990monte}. We then sample from $\hat{g}_{\theta}$ and calculate the optimal transport map to the optimal, not necessarily fair estimate $\hat{g}^*$. 

Finally, we can compute the optimal approximately fair predictor $\hat{h}^{(\varepsilon)}$ through the geodesic interpolation between the parametric fair estimator and the optimal unconstrained estimator. 

\subsection{Statistical Guarantees}
\label{subsec:guarantee}

We establish the estimation guarantee before delving into the fairness guarantee. Note that we have adapted the estimation guarantee sequentially from \cite{gouic2020projection}, and \cite{bernton2019parameter} to account for the parametric framework. 

We denote by $\Hat{\nu}_g$ the classical empirical measure of $\nu_g$ of the form $\Hat{\nu}_g := \frac{1}{n}\sum_{i=1}^{n}\delta_{g(\boldsymbol{x}_i, s_i)}$ where $\delta_{g(\boldsymbol{x}_i, s_i)}$ is the Dirac distribution with mass on $g(\boldsymbol{x}_i, s_i)\in\mathcal{Y}$. In addition to A~\ref{assu:general}, we also require the following technical condition.
\begin{assu}[Smoothness \& Bound assumption]\label{assu:LipsGen}
    We assume that $(\Hat{g}(\cdot, s) )_{s\in\mathcal{S}}$ are uniformly Lipschitz and the estimator $\Hat{g}$ is bounded.
\end{assu}
With this assumption, we are then able to derive the following estimation guarantee for $\Hat{g}^{(0)}$.%, estimator of $g^{(0)*}$.
\begin{lem}[adapted from \cite{gouic2020projection} Thm.~8]
\label{lem:gouic}
    Under A.~\ref{assu:LipsGen}, and assuming the $L_2$-consistency
$$
\mathbb{E}[(g^*(\boldsymbol{X}, S) - \hat{g}(\boldsymbol{X}, S))^2] \to 0 \ \text{as }n\to \infty\enspace,
$$
we have both,
$$
\mathcal{W}_2^2 (\Hat{\nu}_{\hat{g}|s}, \nu_{g^{*}|s}) \to 0 \quad \text{and}\quad \mathcal{W}_2^2 (\Hat{\nu}_{\hat{g}^{(0)}|s}, \nu_{g^{(0)*}}) \to 0\quad\text{a.s.}
$$  
%$$
%\mathcal{W}_2(\nu_{g^{(0)*}}, \Hat{\nu}_{\Hat{g}^{(0)}}) \to 0 \quad a.s \enspace.
%$$
\end{lem}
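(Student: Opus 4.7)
I would split the statement into its two claims and treat them in sequence, since the second builds on the first.

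\textbf{First claim: $\mathcal{W}_2^2(\widehat{\nu}_{\hat{g}|s}, \nu_{g^*|s}) \to 0$ a.s.} The natural route is a triangle-inequality decomposition through the intermediate law $\nu_{\hat{g}|s}$:
\begin{equation*}
\mathcal{W}_2(\widehat{\nu}_{\hat{g}|s}, \nu_{g^*|s}) \;\leq\; \mathcal{W}_2(\widehat{\nu}_{\hat{g}|s}, \nu_{\hat{g}|s}) \;+\; \mathcal{W}_2(\nu_{\hat{g}|s}, \nu_{g^*|s}).
\end{equation*}
For the first summand I would invoke the standard a.s.\ convergence in $\mathcal{W}_2$ of the empirical measure towards its population counterpart; this applies since A.~\ref{assu:LipsGen} makes $\hat{g}$ bounded, so in particular every moment of $\nu_{\hat{g}|s}$ exists and the Varadarajan-type result for empirical measures in Wasserstein distance is available (restricted to the subsample with $S_i = s$, which has cardinality $\sim N p_s \to \infty$ by the law of large numbers since $p_s>0$). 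For the second summand I would bound it above by the conditional $L^2$ distance,
\begin{equation*}
\mathcal{W}_2^2(\nu_{\hat{g}|s}, \nu_{g^*|s}) \;\leq\; \mathbb{E}\bigl[(\hat{g}(\boldsymbol{X},S)-g^*(\boldsymbol{X},S))^2 \,\big|\, S=s\bigr] \;\leq\; \tfrac{1}{p_s}\,\mathbb{E}[(\hat{g}-g^*)^2],
\end{equation*}
which vanishes by the $L^2$-consistency hypothesis.

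\textbf{Second claim: $\mathcal{W}_2^2(\widehat{\nu}_{\hat{g}^{(0)}|s}, \nu_{g^{(0)*}}) \to 0$ a.s.} Here the key point is that $\hat{g}^{(0)}$ is defined (Step~1 of Algorithm~\ref{alg:optimization}) by the plug-in pushforward of $\hat{g}$ onto the empirical Wasserstein barycenter of the conditional measures $\{\widehat{\nu}_{\hat{g}|s}\}_{s\in\mathcal{S}}$. By construction $\widehat{\nu}_{\hat{g}^{(0)}|s}$ therefore coincides (up to the empirical weights $\hat{p}_s$) with that empirical barycenter. The plan is to invoke the stability/continuity statement of \cite{gouic2020projection} Theorem 8: the Wasserstein barycenter is continuous in $\mathcal{W}_2$ with respect to its input measures and weights. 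Combining this with the first claim (which gives $\mathcal{W}_2(\widehat{\nu}_{\hat{g}|s}, \nu_{g^*|s}) \to 0$ for every $s$) and with the a.s.\ convergence $\hat{p}_s \to p_s$ by the strong law of large numbers, the empirical barycenter of the $\widehat{\nu}_{\hat{g}|s}$ converges in $\mathcal{W}_2$ to the population barycenter of the $\nu_{g^*|s}$, which is exactly $\nu_{g^{(0)*}}$ by Eq.~\eqref{eq:FairPredTh}.

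\textbf{Main obstacle.} The first claim is essentially routine once the boundedness of $\hat{g}$ and the $L^2$-consistency are in hand; the delicate step is the second, where I need to ensure that the cited continuity of the barycenter applies in our setting with \emph{random} input measures and \emph{random} weights, and that the convergence passes through the quantile/CDF plug-in used to define $\hat{g}^{(0)}$. The uniform Lipschitz assumption on $(\hat{g}(\cdot,s))_{s\in\mathcal{S}}$ is what makes the conditional CDFs and quantiles well-behaved enough for this stability result to be invoked verbatim from \cite{gouic2020projection}, and I would rely on that adaptation rather than reprove the barycenter continuity from scratch.
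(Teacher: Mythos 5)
Your proposal is correct and is consistent with the paper's treatment: the paper in fact gives no written proof of this lemma, deferring entirely to the cited Theorem~8 of \cite{gouic2020projection}, and your two-step argument (triangle inequality plus the conditional $L^2$ bound $\mathcal{W}_2^2(\nu_{\hat{g}|s},\nu_{g^*|s})\le p_s^{-1}\mathbb{E}[(\hat{g}-g^*)^2]$ for the first claim, then stability of the Wasserstein barycenter under $\mathcal{W}_2$-convergence of the conditional inputs and weights for the second) is precisely the content of that cited result. The only point to keep explicit is that $\hat{g}=\hat{g}_n$ changes with $n$, so the empirical-measure term $\mathcal{W}_2(\widehat{\nu}_{\hat{g}_n|s},\nu_{\hat{g}_n|s})$ must be controlled uniformly over the class of functions the estimator can realize — which is exactly what the uniform Lipschitz and boundedness conditions of A.~\ref{assu:LipsGen} are there to provide, as you correctly identify.
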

%Note that, under mild conditions specified in Appendix~\ref{subappx:cvg}, \cite{gouic2020projection} shows that:
%$$
%\mathcal{W}_2(\nu_{g^{(0)*}}, \Hat{\nu}_{\Hat{g}^{(0)}}) \to 0 \text{ almost surely (\textit{a.s.}), as } n\to +\infty\enspace.
%$$
We establish further statistical guarantees for $\Hat{h}^{(\varepsilon)}$. In addition to the aforementioned assumptions and under mild assumptions specified in Appx.~\ref{subappx:cvg} \cite{bernton2019parameter} (Th.2.4) shows that using MEWE as $n\to +\infty$,
\begin{equation}
\label{eq:mewe}
    \inf_{\Hat{g}_\theta\in\mathcal{G}_\Theta} \mathcal{W}_2(\Hat{\nu}_{\Hat{g}^{(0)}}, \Hat{\nu}_{\Hat{g}_{\theta}}) \to \inf_{g_\theta\in\mathcal{G}_\Theta}\mathcal{W}_2(\nu_{g^{(0)*}}, \nu_{g_\theta})\quad a.s.
\end{equation}
From the results above and given any $\varepsilon \in [0, 1]$ we can directly state the following corollary:

\begin{coro}[Consistency for $\varepsilon \geq 0$]
\label{coro:estimation}
Let $\varepsilon\in[0, 1]$, if $\mathbb{E}\norm{\Hat{g}_\theta(\boldsymbol{X}, S) - \Tilde{g_\theta}(\boldsymbol{X}, S)}^2 \underset{n\to\infty}{\longrightarrow} 0$, then,
$$
\Hat{\nu}_{\Hat{h}^{(\varepsilon)}|s}\underset{n\to\infty}{\longrightarrow} \nu_{h^{(\varepsilon)*}|s}\quad \text{in } \mathcal{W}_2 \text{ a.s.}\quad\enspace.
$$
\end{coro}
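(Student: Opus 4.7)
The plan is to decompose the target Wasserstein distance via the triangle inequality into two pieces: an ``identity-coupling'' piece that captures the discrepancy between the estimated and oracle geodesics on the calibration sample, and an ``empirical-to-true'' piece handled by a standard empirical-measure convergence result. Let $\hat{\nu}_{h^{(\varepsilon)*}|s}$ denote the empirical measure one would obtain by applying the oracle $h^{(\varepsilon)*}$ to the calibration points with $S_i = s$. Then
\begin{equation*}
\mathcal{W}_2(\hat{\nu}_{\hat{h}^{(\varepsilon)}|s}, \nu_{h^{(\varepsilon)*}|s}) \leq \mathcal{W}_2(\hat{\nu}_{\hat{h}^{(\varepsilon)}|s}, \hat{\nu}_{h^{(\varepsilon)*}|s}) + \mathcal{W}_2(\hat{\nu}_{h^{(\varepsilon)*}|s}, \nu_{h^{(\varepsilon)*}|s}).
\end{equation*}

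For the second summand, Assumption~\ref{assu:general} together with Assumption~\ref{assu:LipsGen} ensures $\nu_{h^{(\varepsilon)*}|s}$ has finite second moment, and the classical almost-sure convergence of the empirical measure in $\mathcal{W}_2$ (Varadarajan-type results) yields convergence to zero, exactly as in the proof of Lemma~\ref{lem:gouic} (cf. \cite{gouic2020projection}). For the first summand, use the identity coupling over the index set $I_s = \{i : S_i = s\}$, whose cardinality satisfies $|I_s|/N \to p_s > 0$ a.s., giving
\begin{equation*}
\mathcal{W}_2^2(\hat{\nu}_{\hat{h}^{(\varepsilon)}|s}, \hat{\nu}_{h^{(\varepsilon)*}|s}) \leq \frac{1}{|I_s|}\sum_{i \in I_s}\bigl(\hat{h}^{(\varepsilon)}(\boldsymbol{X}_i,s) - h^{(\varepsilon)*}(\boldsymbol{X}_i,s)\bigr)^2.
\end{equation*}

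The linearity of the geodesic paths then does most of the work: $\hat{h}^{(\varepsilon)} - h^{(\varepsilon)*} = (1-\varepsilon)(\hat{g}_\theta - \Tilde{g_\theta}) + \varepsilon(\hat{g} - g^*)$, so that $(a+b)^2 \leq 2a^2 + 2b^2$ bounds the summand by $2(1-\varepsilon)^2(\hat{g}_\theta - \Tilde{g_\theta})^2 + 2\varepsilon^2(\hat{g} - g^*)^2$. Averaging over $I_s$ and invoking the strong law of large numbers, together with (i) the $L^2$-consistency of $\hat{g}$ assumed in Lemma~\ref{lem:gouic} and (ii) the hypothesis $\mathbb{E}\|\hat{g}_\theta - \Tilde{g_\theta}\|^2 \to 0$, drives this term to zero almost surely; the transfer from unconditional to conditional $L^2$ convergence costs only the factor $1/p_s$, which is finite by assumption.

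The main obstacle is matching the limit $\Tilde{g_\theta}$ produced by the MEWE to the oracle $g^{(0)*}_\theta$ defining $h^{(\varepsilon)*}$. The hypothesis delivers only convergence of $\hat{g}_\theta$ to $\Tilde{g_\theta}$, so the triangle-inequality argument above strictly gives convergence to the law of $\Tilde{h}^{(\varepsilon)} = (1-\varepsilon)\Tilde{g_\theta} + \varepsilon g^*$. Identifying this limit with $h^{(\varepsilon)*}$ requires $\Tilde{g_\theta} = g^{(0)*}_\theta$, which holds precisely when the parametric family $\mathcal{G}_\Theta$ is rich enough that the infimum in Eq.~\eqref{eq:mewe} is attained at the fair barycenter (or more generally up to a $\mathcal{W}_2$-distance bounded by Prop.~\ref{prop:ParamFairPredictorApprox}-\textit{iii)}); once this identification is granted, the two pieces combine to yield the stated almost-sure convergence.
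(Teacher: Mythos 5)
Your argument follows essentially the same route as the paper's (very terse) proof: write both $\hat{h}^{(\varepsilon)}$ and its limit as geodesic interpolations, exploit linearity to reduce the problem to the consistency of the two components ($\hat{g}\to g^*$ from Lemma~\ref{lem:gouic} and $\hat{g}_\theta\to\Tilde{g_\theta}$ from the stated hypothesis), and conclude by a triangle inequality in $\mathcal{W}_2$. Your version is more explicit — the identity-coupling bound on the empirical measures and the Varadarajan-type step for the empirical-to-true piece are exactly the details the paper delegates to \cite{gouic2020projection} — and the only soft spot is the passage from the hypothesis $\mathbb{E}\lVert\hat{g}_\theta-\Tilde{g_\theta}\rVert^2\to 0$ to almost-sure vanishing of the sample average over $I_s$, which requires more than a naive SLLN since $\hat{g}_\theta$ is data-dependent; the paper inherits the same gap by leaning on the cited theorems. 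Your closing observation is the most valuable part and is correct: the paper's own proof in fact only establishes $\mathcal{W}_2(\hat{\nu}_{\hat{h}^{(\varepsilon)}},\nu_{\Tilde{h}^{(\varepsilon)}})\to 0$, where $\Tilde{h}^{(\varepsilon)}=(1-\varepsilon)\Tilde{g_\theta}+\varepsilon g^*$, whereas the corollary is stated with the limit $\nu_{h^{(\varepsilon)*}|s}$ built from $g_\theta^{(0)*}$ (the minimizer of Eq.~\eqref{eq:ExcessRiskParam}), and these two parametric predictors coincide only when the minimizer of $\mathcal{W}_2^2(\nu_{g^{(0)*}},\nu_{g_\theta})$ equals $g_\theta^{(0)*}$ — precisely the identification the paper motivates informally after Prop.~\ref{prop:ParamFairPredictorApprox} ("can serve as a good approximation") but never proves. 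So you have not only reproduced the intended argument but correctly isolated the one step the paper leaves implicit.
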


It is straightforward to extend the results from Eq.~\eqref{eq:mewe} to include the fairness as well.

\paragraph*{Fairness Guarantee}%\fran{Q: Give also the rate of convergence ?} 
Given Eq.~\eqref{eq:mewe}, we can provide a fairness guarantee for the $\varepsilon$ case

\begin{coro}[$\varepsilon$-RI fairness guarantee]
\label{coro:fairness}
    For all $\varepsilon\geq 0$,%, with Lipschitz constant $L>0$ and bound $M>0$ respectively.
    $$
\mathcal{U}\left(\Hat{h}^{(\varepsilon)}\right) = 
    \max_{s\in\mathcal{S}}
\mathcal{W}_1\left(\Hat{\nu}_{\Hat{h}^{(\varepsilon)}|s}, \Hat{\nu}_{\Hat{h}^{(\varepsilon)}}\right)\leq \varepsilon\cdot \mathcal{U}(g^*) + C_n'  \quad\enspace,
    $$
    where $C_n'\to 0$ in $\mathcal{W}_1$ a.s. when $n\to+\infty$.
\end{coro}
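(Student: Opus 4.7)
The plan is to combine the population-level risk--unfairness identity $\mathcal{U}(h^{(\varepsilon)*}) = \varepsilon \cdot \mathcal{U}(g^*)$ from Prop.~\ref{prop:ParamFairPredictorApprox}-\textit{i)} with the almost-sure consistency $\hat{\nu}_{\hat{h}^{(\varepsilon)}|s} \to \nu_{h^{(\varepsilon)*}|s}$ in $\mathcal{W}_2$ established in Corollary~\ref{coro:estimation}. A triangle-inequality decomposition in $\mathcal{W}_1$ then separates the empirical unfairness into the desired $\varepsilon \cdot \mathcal{U}(g^*)$ piece and a vanishing remainder $C_n'$.

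Concretely, I would insert the population measures $\nu_{h^{(\varepsilon)*}|s}$ and $\nu_{h^{(\varepsilon)*}}$ between the empirical ones and apply the triangle inequality:
$$
\mathcal{W}_1(\hat{\nu}_{\hat{h}^{(\varepsilon)}|s}, \hat{\nu}_{\hat{h}^{(\varepsilon)}}) \leq \mathcal{W}_1(\hat{\nu}_{\hat{h}^{(\varepsilon)}|s}, \nu_{h^{(\varepsilon)*}|s}) + \mathcal{W}_1(\nu_{h^{(\varepsilon)*}|s}, \nu_{h^{(\varepsilon)*}}) + \mathcal{W}_1(\nu_{h^{(\varepsilon)*}}, \hat{\nu}_{\hat{h}^{(\varepsilon)}}).
$$
Taking the maximum over $s \in \mathcal{S}$, the middle term equals $\mathcal{U}(h^{(\varepsilon)*}) = \varepsilon \cdot \mathcal{U}(g^*)$ by Prop.~\ref{prop:ParamFairPredictorApprox}-\textit{i)}, and the remaining terms define
$$
C_n' := \max_{s \in \mathcal{S}} \mathcal{W}_1(\hat{\nu}_{\hat{h}^{(\varepsilon)}|s}, \nu_{h^{(\varepsilon)*}|s}) + \mathcal{W}_1(\hat{\nu}_{\hat{h}^{(\varepsilon)}}, \nu_{h^{(\varepsilon)*}}).
$$

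To show $C_n' \to 0$ almost surely, the conditional terms vanish immediately by Corollary~\ref{coro:estimation} combined with $\mathcal{W}_1 \leq \mathcal{W}_2$ (Jensen). For the marginal term, I would use the mixture decompositions $\hat{\nu}_{\hat{h}^{(\varepsilon)}} = \sum_s \hat{p}_s \hat{\nu}_{\hat{h}^{(\varepsilon)}|s}$ and $\nu_{h^{(\varepsilon)*}} = \sum_s p_s \nu_{h^{(\varepsilon)*)}|s}$ together with the convexity of $\mathcal{W}_1$ under mixtures, the strong law $\hat{p}_s \to p_s$ a.s., and the uniform boundedness of the support guaranteed by A.~\ref{assu:LipsGen}.

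The main obstacle is precisely this marginal term: unlike the conditional one, it couples empirical weights $\hat{p}_s$ with empirical conditional measures, so the convexity bound produces a cross term of order $|\hat{p}_s - p_s|$ multiplied by the diameter of the support of $\nu_{h^{(\varepsilon)*}|s}$. The boundedness of $\hat{g}$ in A.~\ref{assu:LipsGen} makes those diameters finite and uniform, while the SLLN gives $|\hat{p}_s - p_s|\to 0$ a.s., so the cross term vanishes and $C_n'\to 0$ almost surely, yielding the stated $\varepsilon$-RI fairness guarantee.
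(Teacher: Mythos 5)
Your proposal is correct and follows essentially the same route as the paper: a three-term triangle inequality in $\mathcal{W}_1$ around the population law of an interpolated fair predictor, with the middle term identified as $\varepsilon\cdot\mathcal{U}(g^*)$ via the risk--unfairness trade-off and the outer terms absorbed into $C_n'$ and sent to zero using the estimation consistency together with $\mathcal{W}_1\leq\mathcal{W}_2$. The only (immaterial) differences are that the paper pivots on $\Tilde{h}^{(\varepsilon)}$ rather than $h^{(\varepsilon)*}$ — both give the same middle term since the underlying parametric predictor is DP-fair — and that you spell out the mixture/empirical-weights argument for the marginal term, a detail the paper leaves implicit.
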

%\begin{proof}
%    The proof is a simple consequence of Thm.\ref{theo:ConsParamFairPredictor} and the triangle inequality,
%    \begin{eqnarray*} \mathcal{W}_1\left(\Hat{\nu}_{\Hat{h}^{(\varepsilon)}|s}, \Hat{\nu}_{\Hat{h}^{(\varepsilon)}}\right) &\leq &\mathcal{W}_1\left(\nu_{h^{(\varepsilon)*}|s}, \nu_{h^{(\varepsilon)*}}\right) +\mathcal{W}_1\left(\Hat{\nu}_{\Hat{h}^{(\varepsilon)}|s}, \nu_{h^{(\varepsilon)*}|s}\right) +\mathcal{W}_1\left(\nu_{h^{(\varepsilon)*}}, \Hat{\nu}_{\Hat{h}^{(\varepsilon)}}\right)\\
%     &= & \varepsilon\cdot \mathcal{U}(g^*) + C_n'
%    \end{eqnarray*} 
%    with $C_n' := \mathcal{W}_1\left(\Hat{\nu}_{\Hat{h}^{(\varepsilon)}|s}, \nu_{h^{(\varepsilon)*}|s}\right) + \mathcal{W}_1\left(\nu_{h^{(\varepsilon)*}}, \Hat{\nu}_{\Hat{h}^{(\varepsilon)}}\right)$
%    which concludes the proof.
%\end{proof}

Therefore, $\Hat{h}^{(\varepsilon)}$ is asymptotically approximately fair with $\varepsilon$-RI. Although we assume $\Hat{g}$ to be $L_2$-consistent, it is worth noting that this corollary still holds even if $\Hat{g}$ is $L_1$-consistent. Thus the provided methodology offers, under some conditions, a post-processing methodology where fairness and risk guarantees are well established.

\section{NUMERICAL EXPERIMENTS}\label{sec:Evaluation}

\begin{figure*}[htbp]
\centering
\includegraphics[width=0.95\textwidth]{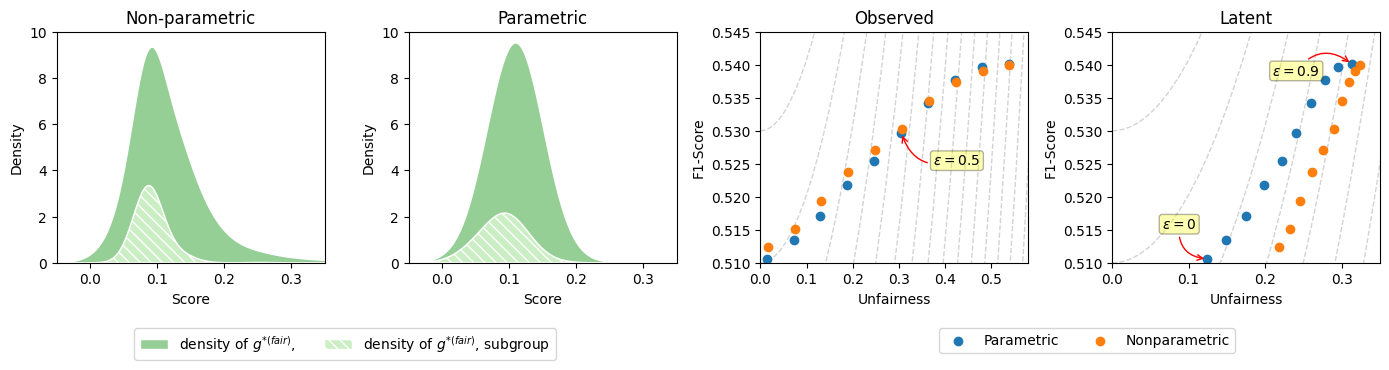}
\caption{Left set, scores for Violent Recidivism on \texttt{COMPAS} data set  corrected for subset of age-variable. Right set, $\varepsilon$-RI fairness and F1-score for observed and latent sensitive variable for the coverage task in the \texttt{folktables} data, in dashed gray, the fairness-risk trade-off lines.}\label{fig:application_fig}
\end{figure*}

For our numerical experiments we consider real data derived form the US-Census, gathered in the \texttt{folktables} package \cite{ding2021retiring} and the widely used \texttt{COMPAS} data set, collected by \cite{larson_angwin_kirchner_mattu_2016}. All source code, links to data, simulation details and specifications for the machines used throughout the experiments can be found on the code repository\footnote{\href{https://github.com/ParamFair/submission_974}{github.com/ParamFair/submission\_974}}.

We highlight two core properties, where domain knowledge can be incorporated in the estimation. The first set of experiments shows how a parametric form can help lessen the unfairness for a latent sensitive variable. The second experiment illustrates how prior knowledge can be used when the training data contains errors and cannot be efficiently corrected due to few available data points. For the simulations, we use a LightGBM \cite{ke2017lightgbm} base model and average our results across ten Monte-Carlo simulations.

\begin{table}[htbp]
\footnotesize
\begin{center}
\def\arraystretch{1.55}%
\setlength\tabcolsep{1.8pt}
\resizebox{\columnwidth}{!}{%
%\begin{tabularx}{\columnwidth}{|x x x x|}
\begin{tabular}{|c c c c|}
\hline
  & \textbf{\textit{Uncorrected}} & \textbf{\textit{Standard}} & \textbf{\textit{Parametric}} \\
\hline
 & \multicolumn{3}{c|}{\textit{Classification - \texttt{Compas} - Normal }}\\
\cline{2-4}
Observed & 0.032 $\pm$ 0.023 & 0.026 $\pm$ 0.014 & 0.012 $\pm$ 0.007 \\
Latent & 0.111 $\pm$ 0.084 & 0.116 $\pm$ 0.086 & 0.045 $\pm$ 0.025\\
F1 & 0.221 $\pm$ 0.078 & 0.231 $\pm$ 0.080 & 0.227 $\pm$ 0.078\\
& \multicolumn{3}{c|}{\textit{Classification - \texttt{folktables} - Beta}}\\
\cline{2-4}
Observed & 0.586 $\pm$ 0.005 & 0.016 $\pm$ 0.007 & 0.038 $\pm$ 0.022 \\
Latent & 0.328 $\pm$ 0.004 & 0.213 $\pm$ 0.004 & 0.120 $\pm$ 0.002\\

F1, $\varepsilon$=0.00 & 0.538 $\pm$ 0.001 & 0.516 $\pm$ 0.004 & 0.513 $\pm$ 0.003\\
F1, $\varepsilon$=0.25 & 0.538 $\pm$  0.001& 0.519 $\pm$ 0.003 & 0.517 $\pm$ 0.003\\
F1, $\varepsilon$=0.50 & 0.538 $\pm$ 0.001& 0.528 $\pm$ 0.003 & 0.527 $\pm$ 0.003\\
F1, $\varepsilon$=0.75 & 0.538 $\pm$ 0.001& 0.535 $\pm$ 0.002 & 0.536 $\pm$ 0.002\\

%\hline
& \multicolumn{3}{c|}{\textit{Regression - \texttt{folktables} - Measurement Error}}\\
\cline{2-4}
MSE - 0\% & N/A & 0.553 $\pm$ 0.006 & 0.569 $\pm$ 0.007 \\
MSE - 25\% & N/A & 0.711 $\pm$ 0.007 & 0.709 $\pm$ 0.007 \\
MSE - 50\% & N/A & 1.873 $\pm$ 0.014 & 1.737 $\pm$ 0.016 \\
MSE - 75\% & N/A & 5.704 $\pm$ 0.027 & 4.764 $\pm$ 0.027 \\
\hline
%\hline
\end{tabular}
}
\end{center}
\label{tab:sims}
\caption{Results for simulations, indicated are means over the simulations with standard errors reported. Note that the \textit{Uncorrected} column for the folktables classification task stays constant across all $\varepsilon$ values as it is the basis of interpolation. }
\end{table}

\subsection{Presence Of Latent Sensitive Variables}

A common, yet understudied problem in fairness applications is the absence of observable sensitive subgroups. This can arise either because the sensitive variable is not recorded due to regulatory concerns or when a variable is only available in an aggregated form. As shown in the introduction, a distributional constraint can help mitigate this issue. To illustrate the use on the \texttt{COMPAS} dataset, we estimate the scores for violent-recidivism and correct for the categorical age variable, however for the training phase we only observe whether an individual is middle aged or not (\textit{Observed} sensitive variable). During the test phase, we also evaluate the unfairness with respect to the \textit{Latent} sensitive variable, which is defined as the indicator that someone is member of the higher aged category. We evaluate the predictive performance and unfairness for the uncorrected method, the standard (nonparametric) approach and the parametric estimator proposed here. As a base distribution we opt for a Gaussian. We repeat the experiment for the \textit{ACSPublicCoverage} classification task from the \texttt{folktables} package for sunbelt states, but use a Beta as parametric form in this case. Here, the observed sensitive variable is a dummy indicating whether someone earns below 45,000\$ and the latent sensitive variable is an additional indicator whether someone earns less than 15,000\$. Results are summarized in Table \ref{tab:sims} with means and standard deviations reported, and illustrated in Figure \ref{fig:application_fig}. Whereas there is a slight decrease in predictive accuracy for the parametric version, it effectively helps mitigating the bias induced in the latent variable when compared to the standard nonparametric approach.

\subsection{Prior Knowledge Of Measurement Error}

A further application where domain knowledge might be useful is when data is either unavailable in large quantities or if the training data contains errors. We conduct a simple simulation on the \texttt{folktables} dataset predicting log wages (the \textit{ACSIncome} variable in its continuous form). We first estimate a fair parametric model based on the Gumbel distribution on data from the state of California. We then suppose our goal is to estimate the wages of the state of Texas, but that the training data contains measurement errors drawn from a Gamma$(s=1,s=0.5)$ distribution on various percentages (0,25\%,50\%,75\%) of the training data. We attempt to correct this using the estimated fair Gumbel parameters. This has the advantage that it is not dependent on the input variables, as other approaches such as transfer learning would be. The performance metrics, based on the mean squared error (MSE), are reported in Table \ref{tab:sims}. If the data is not corrupted, the procedure unsurprisingly adds to the prediction error. However, it significantly decreases estimation errors in the presence of error in the training data, presenting an attractive use-case for incorporating domain knowledge.

\section{CONCLUSION}
\label{sec:conclusion}
Applications of algorithmic fairness mostly consider a single and straightforward fairness measure. However, correcting for one source of bias might inadvertently propagate other biases in the supposedly fair predictions. Further, the agnostic approach of most procedures limits the incorporation of domain knowledge for the resulting predictive distribution. In this article, we show how imposing a parametric constraint can help alleviate both this issues. To the best of our knowledge, we are the first to consider such shape restrictions in algorithmic fairness. Our theoretical results show that these parametric estimators converge to the optimal values and at the same time we were able to bound the total budget necessary as compared to the optimal case. Whereas our results are interesting in their own rights, they also open up the possibility for future research. As different shape restrictions result in different intermediate solutions, a thorough analysis of the effects of different distributions is necessary to further our understanding of such restrictions.

\bibliographystyle{plainnat}
\bibliography{biblio}

\begin{thebibliography}{39}
\providecommand{\natexlab}[1]{#1}
\providecommand{\url}[1]{\texttt{#1}}
\expandafter\ifx\csname urlstyle\endcsname\relax
  \providecommand{\doi}[1]{doi: #1}\else
  \providecommand{\doi}{doi: \begingroup \urlstyle{rm}\Url}\fi

\bibitem[Adebayo and Kagal(2016)]{adebayo2016iterative}
J.~Adebayo and L.~Kagal.
\newblock Iterative orthogonal feature projection for diagnosing bias in black-box models.
\newblock In \emph{Conference on Fairness, Accountability, and Transparency in Machine Learning}, 2016.

\bibitem[Agarwal et~al.(2018)Agarwal, Beygelzimer, Dud{\'\i}k, Langford, and Wallach]{Agarwal_Beygelzimer_Dubik_Langford_Wallach18}
A.~Agarwal, A.~Beygelzimer, M.~Dud{\'\i}k, J.~Langford, and H.~Wallach.
\newblock A reductions approach to fair classification.
\newblock In \emph{Proceedings of the 35th International Conference on Machine Learning}, 2018.

\bibitem[Agarwal et~al.(2019)Agarwal, Dudik, and Wu]{agarwal2019fair}
A.~Agarwal, M.~Dudik, and Z.~S. Wu.
\newblock Fair regression: Quantitative definitions and reduction-based algorithms.
\newblock In \emph{International Conference on Machine Learning}, 2019.

\bibitem[Ambrosio et~al.(2005)Ambrosio, Gigli, and Savar{\'e}]{ambrosio2005gradient}
Luigi Ambrosio, Nicola Gigli, and Giuseppe Savar{\'e}.
\newblock \emph{Gradient flows: in metric spaces and in the space of probability measures}.
\newblock Springer Science \& Business Media, 2005.

\bibitem[Barocas et~al.(2018)Barocas, Hardt, and Narayanan]{barocas-hardt-narayanan}
S.~Barocas, M.~Hardt, and A.~Narayanan.
\newblock \emph{Fairness and Machine Learning}.
\newblock fairmlbook.org, 2018.

\bibitem[Bassetti and Regazzini(2006)]{bassetti2006asymptotic}
Federico Bassetti and Eugenio Regazzini.
\newblock Asymptotic properties and robustness of minimum dissimilarity estimators of location-scale parameters.
\newblock \emph{Theory of Probability \& Its Applications}, 50\penalty0 (2):\penalty0 171--186, 2006.

\bibitem[Basu et~al.(2011)Basu, Shioya, and Park]{basu2011statistical}
Ayanendranath Basu, Hiroyuki Shioya, and Chanseok Park.
\newblock \emph{Statistical inference: the minimum distance approach}.
\newblock CRC press, 2011.

\bibitem[Bernton et~al.(2019)Bernton, Jacob, Gerber, and Robert]{bernton2019parameter}
Espen Bernton, Pierre~E Jacob, Mathieu Gerber, and Christian~P Robert.
\newblock On parameter estimation with the wasserstein distance.
\newblock \emph{Information and Inference: A Journal of the IMA}, 8\penalty0 (4):\penalty0 657--676, 2019.

\bibitem[Calders et~al.(2009)Calders, Kamiran, and Pechenizkiy]{calders2009building}
T.~Calders, F.~Kamiran, and M.~Pechenizkiy.
\newblock Building classifiers with independency constraints.
\newblock In \emph{IEEE international conference on Data mining}, 2009.

\bibitem[Calmon et~al.(2017)Calmon, Wei, Vinzamuri, Ramamurthy, and Varshney]{calmon2017optimized}
F.~Calmon, D.~Wei, B.~Vinzamuri, K.~N. Ramamurthy, and K.~R. Varshney.
\newblock Optimized pre-processing for discrimination prevention.
\newblock In \emph{Neural Information Processing Systems}, 2017.

\bibitem[Charpentier et~al.(2023)Charpentier, Hu, and Ratz]{charpentier2023mitigating}
Arthur Charpentier, François Hu, and Philipp Ratz.
\newblock Mitigating discrimination in insurance with wasserstein barycenters, 2023.

\bibitem[Chiappa et~al.(2020)Chiappa, Jiang, Stepleton, Pacchiano, Jiang, and Aslanides]{chiappa2020general}
S.~Chiappa, R.~Jiang, T.~Stepleton, A.~Pacchiano, H.~Jiang, and J.~Aslanides.
\newblock A general approach to fairness with optimal transport.
\newblock In \emph{AAAI}, 2020.

\bibitem[Chzhen et~al.(2020{\natexlab{a}})Chzhen, Denis, Hebiri, Oneto, and Pontil]{Chzhen_Denis_Hebiri_Oneto_Pontil20Recali}
E.~Chzhen, C.~Denis, M.~Hebiri, L.~Oneto, and M.~Pontil.
\newblock Fair regression via plug-in estimator and recalibrationwith statistical guarantees.
\newblock In \emph{Advances in Neural Information Processing Systems}, 2020{\natexlab{a}}.

\bibitem[Chzhen et~al.(2020{\natexlab{b}})Chzhen, Denis, Hebiri, Oneto, and Pontil]{Chzhen_Denis_Hebiri_Oneto_Pontil20Wasser}
E.~Chzhen, C.~Denis, M.~Hebiri, L.~Oneto, and M.~Pontil.
\newblock Fair regression with wasserstein barycenters.
\newblock In \emph{Advances in Neural Information Processing Systems}, 2020{\natexlab{b}}.

\bibitem[Chzhen et~al.(2020{\natexlab{c}})Chzhen, Denis, Hebiri, Oneto, and Pontil]{chzhen2020fair}
E.~Chzhen, C.~Denis, M.~Hebiri, L.~Oneto, and M.~Pontil.
\newblock Fair regression via plug-in estimator and recalibration with statistical guarantees.
\newblock \emph{\url{https://hal.archives-ouvertes.fr/hal-02501190}}, 2020{\natexlab{c}}.

\bibitem[Chzhen and Schreuder(2022)]{chzhen2022minimax}
Evgenii Chzhen and Nicolas Schreuder.
\newblock A minimax framework for quantifying risk-fairness trade-off in regression.
\newblock \emph{The Annals of Statistics}, 50\penalty0 (4):\penalty0 2416--2442, 2022.

\bibitem[Chzhen et~al.(2019)Chzhen, Denis, Hebiri, Oneto, and Pontil]{chzhen2019leveraging}
Evgenii Chzhen, Christophe Denis, Mohamed Hebiri, Luca Oneto, and Massimiliano Pontil.
\newblock Leveraging labeled and unlabeled data for consistent fair binary classification.
\newblock \emph{Advances in Neural Information Processing Systems}, 32, 2019.

\bibitem[Denis et~al.(2021)Denis, Elie, Hebiri, and Hu]{denis2021fairness}
Christophe Denis, Romuald Elie, Mohamed Hebiri, and Fran{\c{c}}ois Hu.
\newblock Fairness guarantee in multi-class classification.
\newblock \emph{arXiv preprint arXiv:2109.13642}, 2021.

\bibitem[Ding et~al.(2021)Ding, Hardt, Miller, and Schmidt]{ding2021retiring}
Frances Ding, Moritz Hardt, John Miller, and Ludwig Schmidt.
\newblock Retiring adult: New datasets for fair machine learning.
\newblock \emph{Advances in Neural Information Processing Systems}, 34, 2021.

\bibitem[Donini et~al.(2018)Donini, Oneto, Ben-David, Shawe-Taylor, and Pontil]{Donini_Oneto_Ben-David_Taylor_Pontil18}
M.~Donini, L.~Oneto, S.~Ben-David, J.~S. Shawe-Taylor, and M.~Pontil.
\newblock Empirical risk minimization under fairness constraints.
\newblock In \emph{Neural Information Processing Systems}, 2018.

\bibitem[Elie et~al.(2021)Elie, Hillairet, Hu, and Juillard]{elie2021overview}
Romuald Elie, Caroline Hillairet, Fran{\c{c}}ois Hu, and Marc Juillard.
\newblock An overview of active learning methods for insurance with fairness appreciation.
\newblock \emph{arXiv preprint arXiv:2112.09466}, 2021.

\bibitem[Feldman et~al.(2015)Feldman, Friedler, Moeller, Scheidegger, and Venkatasubramanian]{feldman2015certifying}
M.~Feldman, S.~A. Friedler, J.~Moeller, C.~Scheidegger, and S.~Venkatasubramanian.
\newblock Certifying and removing disparate impact.
\newblock In \emph{International Conference on Knowledge Discovery and Data Mining}, 2015.

\bibitem[Foulds et~al.(2020)Foulds, Islam, Keya, and Pan]{foulds2020intersectional}
James~R Foulds, Rashidul Islam, Kamrun~Naher Keya, and Shimei Pan.
\newblock An intersectional definition of fairness.
\newblock In \emph{2020 IEEE 36th International Conference on Data Engineering (ICDE)}, pages 1918--1921. IEEE, 2020.

\bibitem[Gaucher et~al.(2022)Gaucher, Schreuder, and Chzhen]{gaucher2022fair}
Solenne Gaucher, Nicolas Schreuder, and Evgenii Chzhen.
\newblock Fair learning with wasserstein barycenters for non-decomposable performance measures.
\newblock \emph{arXiv preprint arXiv:2209.00427}, 2022.

\bibitem[Gaucher et~al.(2023)Gaucher, Schreuder, and Chzhen]{gaucher2023fair}
Solenne Gaucher, Nicolas Schreuder, and Evgenii Chzhen.
\newblock Fair learning with wasserstein barycenters for non-decomposable performance measures.
\newblock In \emph{International Conference on Artificial Intelligence and Statistics}, pages 2436--2459. PMLR, 2023.

\bibitem[Gordaliza et~al.(2019)Gordaliza, Del~Barrio, Fabrice, and Loubes]{gordaliza2019obtaining}
P.~Gordaliza, E.~Del~Barrio, G.~Fabrice, and J.~M. Loubes.
\newblock Obtaining fairness using optimal transport theory.
\newblock In \emph{International Conference on Machine Learning}, 2019.

\bibitem[Gouic et~al.(2020)Gouic, Loubes, and Rigollet]{gouic2020projection}
Thibaut~Le Gouic, Jean-Michel Loubes, and Philippe Rigollet.
\newblock Projection to fairness in statistical learning.
\newblock \emph{arXiv preprint arXiv:2005.11720}, 2020.

\bibitem[Hu et~al.(2023{\natexlab{a}})Hu, Ratz, and Charpentier]{hu2023fairness}
Fran{\c{c}}ois Hu, Philipp Ratz, and Arthur Charpentier.
\newblock Fairness in multi-task learning via wasserstein barycenters.
\newblock In Danai Koutra, Claudia Plant, Manuel Gomez~Rodriguez, Elena Baralis, and Francesco Bonchi, editors, \emph{Machine Learning and Knowledge Discovery in Databases: Research Track}, pages 295--312, Cham, 2023{\natexlab{a}}. Springer Nature Switzerland.
\newblock ISBN 978-3-031-43415-0.

\bibitem[Hu et~al.(2023{\natexlab{b}})Hu, Ratz, and Charpentier]{hu2023sequentially}
Fran{\c{c}}ois Hu, Philipp Ratz, and Arthur Charpentier.
\newblock A sequentially fair mechanism for multiple sensitive attributes.
\newblock \emph{arXiv preprint arXiv:2309.06627}, 2023{\natexlab{b}}.

\bibitem[Ke et~al.(2017)Ke, Meng, Finley, Wang, Chen, Ma, Ye, and Liu]{ke2017lightgbm}
Guolin Ke, Qi~Meng, Thomas Finley, Taifeng Wang, Wei Chen, Weidong Ma, Qiwei Ye, and Tie-Yan Liu.
\newblock Lightgbm: A highly efficient gradient boosting decision tree.
\newblock \emph{Advances in neural information processing systems}, 30, 2017.

\bibitem[Kearns et~al.(2018)Kearns, Neel, Roth, and Wu]{kearns2018preventing}
Michael Kearns, Seth Neel, Aaron Roth, and Zhiwei~Steven Wu.
\newblock Preventing fairness gerrymandering: Auditing and learning for subgroup fairness.
\newblock In \emph{International conference on machine learning}, pages 2564--2572. PMLR, 2018.

\bibitem[Kloeckner(2010)]{kloeckner2010geometric}
Benoit Kloeckner.
\newblock A geometric study of wasserstein spaces: Euclidean spaces.
\newblock \emph{Annali della Scuola Normale Superiore di Pisa-Classe di Scienze}, 9\penalty0 (2):\penalty0 297--323, 2010.

\bibitem[Kong(2022)]{kong2022intersectionally}
Youjin Kong.
\newblock Are “intersectionally fair” ai algorithms really fair to women of color? a philosophical analysis.
\newblock In \emph{Proceedings of the 2022 ACM Conference on Fairness, Accountability, and Transparency}, pages 485--494, 2022.

\bibitem[Larson et~al.(2016)Larson, Angwin, Kirchner, and Mattu]{larson_angwin_kirchner_mattu_2016}
Jeff Larson, Julia Angwin, Lauren Kirchner, and Surya Mattu.
\newblock How we analyzed the compas recidivism algorithm, May 2016.
\newblock URL \url{https://www.propublica.org/article/how-we-analyzed-the-compas-recidivism-algorithm}.

\bibitem[Ratz et~al.(2023)Ratz, Hu, and Charpentier]{ratz2023addressing}
Philipp Ratz, François Hu, and Arthur Charpentier.
\newblock Addressing fairness and explainability in image classification using optimal transport, 2023.

\bibitem[Santambrogio(2015)]{santambrogio2015optimal}
F.~Santambrogio.
\newblock \emph{Optimal transport for applied mathematicians}.
\newblock Springer, 2015.

\bibitem[Villani(2003)]{villani2003topics}
C.~Villani.
\newblock \emph{Topics in Optimal Transportation}.
\newblock American Mathematical Society, 2003.

\bibitem[Zafar et~al.(2019)Zafar, Valera, Gomez-Rodriguez, and Gummadi]{zafar2019fairness}
M.~B. Zafar, I.~Valera, M.~Gomez-Rodriguez, and K.~P. Gummadi.
\newblock Fairness constraints: A flexible approach for fair classification.
\newblock \emph{Journal of Machine Learning Research}, 20\penalty0 (75):\penalty0 1--42, 2019.

\bibitem[Zemel et~al.(2013)Zemel, Wu, Swersky, Pitassi, and Dwork]{zemel2013learning}
R.~Zemel, Y.~Wu, K.~Swersky, T.~Pitassi, and C.~Dwork.
\newblock Learning fair representations.
\newblock In \emph{International Conference on Machine Learning}, 2013.

\end{thebibliography}

\clearpage
%\begin{document}
\onecolumn

\begin{center}
\bf{\Large Parametric Fairness with Statistical Guarantees: \\
Supplementary Materials}
\end{center}

\appendix

\section{Broader Impact}
 Our work is centered around fairness, which is a goal we sincerely believe all model should strive to achieve. Nevertheless, to ensure fairness in models, one needs to define unfairness as its counterpart. This naturally leads to a conundrum when performing research on the topic. On one hand, we would like our models to be fair, but to analyse the differences and show an improvement, we first need to create an unfair outcome. As has been shown in the past, simply ignoring the sensitive attributes does not solve the problem of bias in the data. Further, as more flexible methods make their way into practical applications, this issue is only bound to increase. Hence it is our conviction that estimating intentionally unfair models (by for example including sensitive variables explicitly in the training phase) is ethically justifiable if the goal is to provide a truly fair estimation. In that sense our work contributes to achieving fairness, and does not create new risks by itself. 

In our empirical application, we consider both data for wages and criminal justice. As has been discussed numerous times before in the media and academic research, there is a discrepancy between scores and predictions obtained for sensitive groups in these data sets. To avoid any misguided interpretation, we refrain from specifying the features directly, and merely use a subset of sensitive features to demonstrate the effectiveness of our method. Our goal here is to contribute to the body of research aiming to correct biases from arbitrary machine learning models and hence we put our focus on the metrics associated with this. In theory, we could use arbitrary data, but in the spirit of easier comparability, we opted to use well-known and publicly accessible data sets. We believe the safeguards taken and the use of the data in this context justify its use. 

\section{Proofs}

In this section, we gather the proofs of our results. Section~\ref{subappx:tech} covers essential fairness results in non-parametric fair regression. The proof for parametric fair predictors in the exact fairness framework is detailed in Section~\ref{subappx:main}. We also extend these results to approximate fairness in Section~\ref{subappx:approx} and discuss the theoretical properties of our estimation procedure in Section~\ref{subappx:cvg}.

\subsection{Fair Regression}\label{subappx:tech}

Recall that $g^*(\boldsymbol{X}, S) = \mathbb{E}[Y | \boldsymbol{X}, S]$ where we consider the general regression problems (can be extended to vector-valued problems)
\begin{equation*}\label{eq:RegGeneral}
   Y = g^{*}(\boldsymbol{X}, S) + \zeta \enspace,
\end{equation*}
with $\zeta\in\mathbb{R}$ a zero mean noise. The following lemma is adapted from \cite{gouic2020projection}. 

\begin{lem}[\cite{gouic2020projection}]\label{lem:ERisk}
    For a subclass of regressors $\mathcal{H}\subset\mathcal{G}$, if for any $h\in\mathcal{H}$ and $g\in\mathcal{G}$, where,
    $$
    \nu_{g|s} = \nu_{h|s} \quad \text{for all } s\in\mathcal{S}\enspace,
    $$
    we have $g\in\mathcal{H}$, then we can derive the associated excess-risk as,
    \begin{equation}
        \mathcal{E}(\mathcal{H}) = \inf_{h \in \mathcal{H}}\sum_{s\in\mathcal{S}}p_s \mathcal{W}_2^2\left(\nu_{g^*|s}, \nu_{h|s}\right)\enspace.
    \end{equation}
\end{lem}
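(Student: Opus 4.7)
The plan is to decompose the risk, establish the Wasserstein distance as a lower bound for each conditional $L^2$ error, and then use the closure assumption on $\mathcal{H}$ to construct a predictor achieving the bound, thereby proving equality.

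First I would simplify the risk expression. Using $Y = g^*(\boldsymbol{X},S) + \zeta$ with $\mathbb{E}[\zeta \mid \boldsymbol{X}, S] = 0$, a direct expansion gives, for any $h \in \mathcal{G}$,
\begin{equation*}
\mathcal{R}(h) - \mathcal{R}^* = \mathbb{E}\bigl(g^*(\boldsymbol{X}, S) - h(\boldsymbol{X}, S)\bigr)^2 = \sum_{s\in\mathcal{S}} p_s\, \mathbb{E}\bigl[(g^*(\boldsymbol{X}, S) - h(\boldsymbol{X}, S))^2 \,\big|\, S=s\bigr].
\end{equation*}

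Next I would establish the lower bound. For each $s \in \mathcal{S}$, the conditional joint law of $(g^*(\boldsymbol{X}, S), h(\boldsymbol{X}, S)) \mid S=s$ is a coupling between $\nu_{g^*|s}$ and $\nu_{h|s}$, so by the definition of the $2$-Wasserstein distance
\begin{equation*}
\mathbb{E}\bigl[(g^*(\boldsymbol{X}, S) - h(\boldsymbol{X}, S))^2 \,\big|\, S=s\bigr] \geq \mathcal{W}_2^2(\nu_{g^*|s}, \nu_{h|s}).
\end{equation*}
Summing against $(p_s)_{s\in\mathcal{S}}$ and taking the infimum over $h \in \mathcal{H}$ yields $\mathcal{E}(\mathcal{H}) \geq \inf_{h\in\mathcal{H}} \sum_{s} p_s\, \mathcal{W}_2^2(\nu_{g^*|s}, \nu_{h|s})$.

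For the matching upper bound, I would exploit Assumption~\ref{assu:general}: since each $\nu_{g^*|s}$ is non-atomic with finite second moment, for every $h \in \mathcal{H}$ there exists a (Monge) optimal transport map $T_s : \mathbb{R} \to \mathbb{R}$ with $T_s \sharp \nu_{g^*|s} = \nu_{h|s}$ and $\mathbb{E}[(g^*(\boldsymbol{X},S) - T_s(g^*(\boldsymbol{X},S)))^2 \mid S=s] = \mathcal{W}_2^2(\nu_{g^*|s}, \nu_{h|s})$. Define $\tilde{h}(\boldsymbol{x}, s) := T_s(g^*(\boldsymbol{x}, s))$. By construction $\nu_{\tilde{h}|s} = \nu_{h|s}$ for every $s$, and in particular $\tilde{h} \in \mathcal{G}$ (the conditional measures satisfy the same non-atomic, second-moment conditions as those of $h$). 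The closure hypothesis on $\mathcal{H}$ then gives $\tilde{h} \in \mathcal{H}$, and plugging $\tilde{h}$ into the risk decomposition yields
\begin{equation*}
\mathcal{R}(\tilde{h}) - \mathcal{R}^* = \sum_{s\in\mathcal{S}} p_s\, \mathcal{W}_2^2(\nu_{g^*|s}, \nu_{h|s}),
\end{equation*}
so $\mathcal{E}(\mathcal{H}) \leq \sum_s p_s\, \mathcal{W}_2^2(\nu_{g^*|s}, \nu_{h|s})$ for every $h \in \mathcal{H}$. Taking the infimum over $\mathcal{H}$ closes the two inequalities.

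The main obstacle, such as it is, is the careful use of the closure property: one must verify that the function $\tilde{h} = T_{(\cdot)} \circ g^*$ lands in $\mathcal{G}$ and hence, by the hypothesis, in $\mathcal{H}$. This is where Assumption~\ref{assu:general} is essential, both to guarantee existence and (essential) uniqueness of the Monge map $T_s$ and to ensure $\tilde{h}$ inherits the non-atomicity and finite second moment required to be a member of $\mathcal{G}$. Once this is in hand, the argument is a clean sandwich between the Wasserstein lower bound and the attainment of that bound along $\tilde{h}$.
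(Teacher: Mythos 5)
Your proof is correct, and it is the standard argument behind this result: the paper itself does not prove Lemma~\ref{lem:ERisk} but imports it from \cite{gouic2020projection}, whose proof follows exactly your sandwich — the coupling lower bound $\mathbb{E}[(g^*-h)^2\mid S=s]\geq \mathcal{W}_2^2(\nu_{g^*|s},\nu_{h|s})$ on one side, and attainment via $\tilde h(\boldsymbol{x},s)=T_s(g^*(\boldsymbol{x},s))$ combined with the closure hypothesis on $\mathcal{H}$ on the other. Your handling of the only delicate point (non-atomicity of $\nu_{g^*|s}$ from A.~\ref{assu:general} to get the Monge map, and $\nu_{\tilde h|s}=\nu_{h|s}$ to place $\tilde h$ in $\mathcal{H}$) is exactly what is needed.
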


Considering Lemma~\ref{lem:ERisk}, as $\mathcal{G}^0$ and $\mathcal{G}^0_\Theta$ represent the subclass of DP-fair regressors, we can easily deduce Eq.~\eqref{eq:FairExcessRisk} and Eq.~\eqref{eq:ExcessRiskParam}:
    \begin{equation*}
        \mathcal{E}(\mathcal{G}^0) = \inf_{g \in \mathcal{G}}\sum_{s\in\mathcal{S}}p_s \mathcal{W}_2^2\left(\nu_{g^*|s}, \nu_{g}\right)
    \quad \text{ and }\quad 
        \mathcal{E}(\mathcal{G}_\Theta^0) = \inf_{g_\theta \in \mathcal{G}_\Theta}\sum_{s\in\mathcal{S}}p_s \mathcal{W}_2^2\left(\nu_{g^*|s}, \nu_{g_\theta}\right)\enspace.
    \end{equation*}

\subsection{Parametric Fair Predictor}\label{subappx:main}

\begin{proof}[\textbf{Proof of Proposition 3.1.}]

Following Eq.~\eqref{eq:ExcessRiskParam}, let $g_\theta \in \mathcal{G}_\Theta$ and applying the Minkowski inequality, we have
        \begin{align*}
            \mathcal{W}_2^2(\nu_{g^*|s}, \nu_{g_\theta}) \leq 2 \mathcal{W}_2^2(\nu_{g^*|s}, \nu_{g^{(0)*}}) + 2 \mathcal{W}_2^2(\nu_{g^{(0)*}}, \nu_{g_\theta}) \quad \text{for all }s\in\mathcal{S} \enspace.
        \end{align*}
        Hence, if we denote $g_\theta^*$ a minimizer of the r.h.s. of the above equation, then
        \begin{align*}
            \inf_{g_\theta \in \mathcal{G}_\Theta} \sum_{s\in\mathcal{S}} p_s \mathcal{W}_2^2(\nu_{g^*|s}, \nu_{g_\theta}) \leq 
            \sum_{s\in\mathcal{S}} p_s \mathcal{W}_2^2(\nu_{g^*|s}, \nu_{g_\theta^*}) \leq 2\sum_{s\in\mathcal{S}} p_s \mathcal{W}_2^2(\nu_{g^*|s}, \nu_{g^{(0)*}}) + 2 \mathcal{W}_2^2(\nu_{g^{(0)*}}, \nu_{g_\theta^*})\enspace,
        \end{align*}
        which implies:
        \begin{equation*}
            \mathcal{E}(\mathcal{G}^0) \leq \mathcal{E}(\mathcal{G}_\Theta^0) \leq 2\left(\mathcal{E}(\mathcal{G}^0) + \inf_{g_\theta \in \mathcal{G}_\Theta} \mathcal{W}_2^2(\nu_{g^{(0)*}}, \nu_{g_\theta})\right)\enspace.
        \end{equation*}

Finally, since $\mathbb{E}[g^*] = \mathbb{E}[g^{(0)*}]$, we derive the bounds on the following budget deviation $\mathcal{D}(g_\theta) = \mathbb{E}[g_\theta - g^{(0)*}]$ instead and we can infer the following upper bound
        \begin{align*}
            \mathcal{D}(g_\theta)^2 \leq \mathbb{E}[(g_\theta - g^{(0)*})^2] \leq \mathcal{W}_2^2 (\nu_{g^{(0)*}}, \nu_{g_\theta}), \quad \text{for all } g_\theta \in \mathcal{G}_\Theta\enspace,
        \end{align*}
        where we used Jensen's inequality on the left side inequality. In words this means that %within $\mathcal{G}_\Theta^0$, 
        $\mathcal{D}(g_\theta)^2$ is controlled by the distributional distance between $g_\theta$ and $g^{(0)*}$. Therefore, within $\mathcal{G}_\Theta$, 
 $g_\theta^{*}$ corresponds to the best solution for budget stability in terms of the $L_2$ norm. Additionally, replacing $g_\theta$ with $g_\theta^{(0)*}$ in the inequality above concludes the proof.
 %the inequality for $\norm{D(\Tilde{h}^{(\varepsilon)})}^2$ is therefore quite straightforward.
\end{proof}

\subsection{Extension to Approximate Fairness}
\label{subappx:approx}

Before going into the main proposition, let us present and prove the following lemma:
\begin{lem}[Risk-unfairness trade-off for interpolation cases]
    For any interpolated predictor of the form:
    $$
    g^{(\varepsilon)}(\boldsymbol{X}, S) = (1-\varepsilon)\cdot g(\boldsymbol{X}, S) + \varepsilon \cdot g^{*}(\boldsymbol{X}, S)
    $$
    with $g\in\mathcal{G}$, we have
    $$
    \mathcal{R}(g^{(\varepsilon)}) = (1-\varepsilon)^2 \cdot \mathcal{R}(g)\quad \text{ and }\quad \mathcal{U}(g^{\varepsilon}) = \varepsilon \cdot \mathcal{U}(g^*)\enspace.
    $$
\end{lem}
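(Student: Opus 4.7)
The plan is to prove the two identities separately: the risk identity reduces to a short algebraic manipulation exploiting Bayes orthogonality, while the unfairness identity rests on a Wasserstein quantile/CDF computation that uses the conditional structure of a fair $g$. I first remark that the unfairness identity can only hold when $g$ is exactly fair (take $g = g^*$: then $g^{(\varepsilon)} = g^*$ for every $\varepsilon$, so $\mathcal{U}(g^{(\varepsilon)}) = \mathcal{U}(g^*)$, contradicting $\varepsilon \cdot \mathcal{U}(g^*)$ at $\varepsilon = 0$). I would therefore read the hypothesis as $g \in \mathcal{G}^0$, in line with Lemma~\ref{lem:RiskUnf} of the main text.

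For the risk, I would use the telescoping identity $g^{(\varepsilon)} - g^* = (1 - \varepsilon)(g - g^*)$ (the $\varepsilon g^*$ pieces cancel after regrouping) together with the decomposition $Y = g^*(\boldsymbol{X}, S) + \zeta$ with $\mathbb{E}[\zeta \mid \boldsymbol{X}, S] = 0$. Orthogonality of the Bayes residual to functions of $(\boldsymbol{X},S)$ then gives
\[
\mathcal{R}(g^{(\varepsilon)}) - \mathcal{R}^* = \mathbb{E}[(g^{(\varepsilon)} - g^*)^2] = (1 - \varepsilon)^2 \, \mathbb{E}[(g - g^*)^2] = (1 - \varepsilon)^2 \bigl( \mathcal{R}(g) - \mathcal{R}^* \bigr),
\]
which is the stated equality in the excess-risk reading used throughout the paper's geodesic discussion.

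For the unfairness, I would rely on the closed-form optimal-transport construction of Eq.~\eqref{eq:FairPredTh}: writing $g(\cdot, s) = T_s(g^*(\cdot, s))$ for a monotone map $T_s$ pushing $\nu_{g^*|s}$ onto $\nu_g$, the interpolation $g^{(\varepsilon)}(\cdot, s) = \bigl((1-\varepsilon) T_s + \varepsilon \,\mathrm{Id}\bigr)(g^*(\cdot, s))$ remains monotone in $g^*(\cdot, s)$. Reading off the conditional quantile gives
\[
Q_{g^{(\varepsilon)} \mid s}(u) = (1 - \varepsilon) Q_{\nu_g}(u) + \varepsilon Q_{g^* \mid s}(u),
\]
and integrating $|Q_{g^{(\varepsilon)} \mid s} - Q_{g^{(\varepsilon)} \mid s'}|$ over $u \in (0,1)$ immediately yields the pairwise identity $\mathcal{W}_1(\nu_{g^{(\varepsilon)} \mid s}, \nu_{g^{(\varepsilon)} \mid s'}) = \varepsilon \, \mathcal{W}_1(\nu_{g^* \mid s}, \nu_{g^* \mid s'})$, since the $(1-\varepsilon) Q_{\nu_g}(u)$ contribution cancels.

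The main obstacle is promoting this pairwise identity to the marginal statement $\mathcal{W}_1(\nu_{g^{(\varepsilon)} \mid s}, \nu_{g^{(\varepsilon)}}) = \varepsilon \, \mathcal{W}_1(\nu_{g^* \mid s}, \nu_{g^*})$, because quantiles do not mix linearly and $Q_{g^{(\varepsilon)}}$ has no clean closed form. My plan is to switch to the CDF representation $\mathcal{W}_1(\mu, \nu) = \int |F_\mu - F_\nu|\, dt$, decompose $F_{g^{(\varepsilon)} \mid s} - F_{g^{(\varepsilon)}} = \sum_{s'} p_{s'} (F_{g^{(\varepsilon)} \mid s} - F_{g^{(\varepsilon)} \mid s'})$, and exploit that $\{\nu_{g^{(\varepsilon)} \mid s'}\}_{s'}$ inherits any stochastic ordering of $\{\nu_{g^* \mid s'}\}_{s'}$ (direct from the conditional-quantile identity above). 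This keeps the sign of each pairwise CDF gap constant in $t$, allowing the absolute value to be pulled outside the $\sum_{s'}$; the $\varepsilon$-factor then factors out of the integral, and taking the max over $s$ delivers $\mathcal{U}(g^{(\varepsilon)}) = \varepsilon \, \mathcal{U}(g^*)$.
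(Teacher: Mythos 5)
Your reading of the hypothesis is correct on both counts you flag: the identity $\mathcal{U}(g^{(\varepsilon)})=\varepsilon\,\mathcal{U}(g^*)$ cannot hold for arbitrary $g\in\mathcal{G}$, and the paper's own one-line proof silently uses $\mathcal{W}_1(\nu_{g|s},\nu_g)=0$, i.e.\ $g\in\mathcal{G}^0$, which is how the main-text version of the lemma is stated. Your risk argument is essentially the paper's --- both reduce to $g^{(\varepsilon)}-g^*=(1-\varepsilon)(g-g^*)$ --- and you are more careful than the paper in noting that the displayed equality really concerns the excess risk $\mathbb{E}[(g-g^*)^2]$ rather than $\mathbb{E}[(Y-g)^2]$. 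One further hypothesis you use but do not state: writing $g(\cdot,s)=T_s(g^*(\cdot,s))$ with $T_s$ monotone is \emph{not} implied by $g\in\mathcal{G}^0$; it is an extra comonotonicity requirement. It is genuinely needed (for an anti-monotone fair $g$ the conditional law of the interpolation depends on the joint coupling and can even collapse to a point mass), so it should appear in the statement rather than be borrowed from Eq.~\eqref{eq:FairPredTh}, which only defines the particular barycenter predictor.

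The genuine gap is the final step, exactly where you locate ``the main obstacle.'' Your plan is to decompose $F_{g^{(\varepsilon)}|s}-F_{g^{(\varepsilon)}}=\sum_{s'}p_{s'}(F_{g^{(\varepsilon)}|s}-F_{g^{(\varepsilon)}|s'})$ and pull the absolute value through the sum using an inherited stochastic ordering. Two problems. First, no stochastic ordering of $\{\nu_{g^*|s}\}_s$ is assumed anywhere in the paper, so you are proving a weaker statement. Second, even under a total stochastic order, the signs of $F_{g^{(\varepsilon)}|s}(t)-F_{g^{(\varepsilon)}|s'}(t)$ are constant in $t$ but differ across $s'$ whenever $s$ is not an extremal group, so $\lvert\sum_{s'}p_{s'}(\cdot)\rvert\neq\sum_{s'}p_{s'}\lvert\cdot\rvert$ and the factor $\varepsilon$ does not come out of the integral. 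In fact the per-group identity you are aiming for is false in general: take $K=3$, $p_s=1/3$, $\nu_{g^*|1}=\mathrm{U}[-1,1]$, $\nu_{g^*|2}=\mathrm{U}[0,1]$, $\nu_{g^*|3}=\mathrm{U}[-1,0]$ and $\nu_g=\mathrm{U}[-1,1]$ (a legitimate $g\in\mathcal{G}^0$ obtained by monotone maps $T_s$). Then $\nu_{g^*}=\mathrm{U}[-1,1]$, so $\mathcal{W}_1(\nu_{g^*|1},\nu_{g^*})=0$; yet at $\varepsilon=1/2$ the interpolated groups are $\mathrm{U}[-1,1]$, $\mathrm{U}[-1/2,1]$, $\mathrm{U}[-1,1/2]$, whose mixture is no longer uniform, and a direct computation gives $\mathcal{W}_1(\nu_{g^{(1/2)}|1},\nu_{g^{(1/2)}})=1/18>0$. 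The $\max$ over $s$ happens to survive in this example because groups $2$ and $3$ do scale linearly, but that is precisely the part that needs an argument, and neither your proposal nor the paper's one-liner supplies one. The clean statement holds if unfairness is measured against the \emph{barycenter}, whose quantile function is linear in the group quantiles; with the mixture-based $\mathcal{U}$ of Eq.~\eqref{eq:Unfairness}, the triangle inequality and convexity of $\mathcal{W}_1$ only yield bounds with extra constants (e.g.\ $\mathcal{U}(g^{(\varepsilon)})\leq 2\varepsilon\,\mathcal{U}(g^*)$), not the asserted equality.
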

\begin{proof}
    For any $g \in \mathcal{G}$, the associated $g^{(\varepsilon)}$ verifies:
        \begin{align*}
            \mathcal{R}(g^{(\varepsilon)}) &= \mathbb{E}[((1-\varepsilon)\cdot g(\boldsymbol{X}, S) + \varepsilon\cdot g^* (\boldsymbol{X}, S) - g^*(\boldsymbol{X}, S))^2] \\
            &=
            \mathbb{E}[((1-\varepsilon)\cdot(g(\boldsymbol{X}, S) - g^*(\boldsymbol{X}, S)))^2] \\
            &= (1-\varepsilon)^2\cdot \mathcal{R}(g) \\
    \mathcal{U}(g^{\varepsilon}) &= \max_{s\in \mathcal{S}}\mathcal{W}_1(\nu_{g^{(\varepsilon)}{|s}}, \nu_{g^{(\varepsilon)}}) = \max_{s\in \mathcal{S}} \varepsilon\mathcal{W}_1(\nu_{g^*|s}, \nu_{g^*}) + 0 = \varepsilon \times \mathcal{U}(g^*)
        \end{align*}
\end{proof}

Thus, through this Lemma we have demonstrated Lemma~\ref{lem:RiskUnf} and Proposition~\ref{prop:ParamFairPredictorApprox}-i).

\begin{proof}[\textbf{Proof of Proposition~\ref{prop:ParamFairPredictorApprox}.}]
We divide the proof into parts according to the three sub-points i), ii) and iii):
    \begin{enumerate}
        \item[i)] {proven above.
        }
        \item[ii)]{
        To derive the approximate parametric fairness, we first define $g^{(\varepsilon)*}$ and $\Tilde{h}^{(\varepsilon)}$ as the solutions to the two following optimization problems:
        \begin{align*}
            \mathcal{E}(\mathcal{H}^\varepsilon) & = \sum_{s\in\mathcal{S}} p_s \mathcal{W}_2^2(\nu_{g^*|s}, \nu_{\Tilde{h}^{(\varepsilon)}| s}) \\
            \mathcal{E}(\mathcal{G}^\varepsilon) & = \sum_{s\in\mathcal{S}} p_s \mathcal{W}_2^2(\nu_{g^*|s}, \nu_{g^{(\varepsilon)*}|s}) = (1-\varepsilon)^2 \sum_{s\in\mathcal{S}} p_s \mathcal{W}_2^2(\nu_{g^*|s}, \nu_{g^{(0)*}})
        \end{align*}
        Then, by the definition of $\mathcal{G}^\varepsilon$ and $\mathcal{H}^\varepsilon$, there exists $g^{(0)*}$ and $g^{(0)*}_\theta$ such that:
        \begin{align*}
            g^{(\varepsilon)*}(\boldsymbol{X}, S) & = (1-\varepsilon)\cdot g^{(0)*}(\boldsymbol{X}, S) + \varepsilon\cdot g^*(\boldsymbol{X}, S) \enspace,\\
            \Tilde{h}^{(\varepsilon)}(\boldsymbol{X}, S) &= (1-\varepsilon)\cdot g_\theta^{(0)*}(\boldsymbol{X}, S) + \varepsilon\cdot g^*(\boldsymbol{X}, S) \enspace.
        \end{align*}
        Further $g^{(0)*}$ corresponds to the Wasserstein barycenter defined in proposition 2.5 and $g_\theta^{(0)*}$ corresponds to the $\min$ specified in i). Conversely, we also have:
        \begin{align*}
            \mathcal{W}_2^2(\nu_{g^*|s}, \nu_{h^{(\varepsilon)}|s}) \leq 2 \mathcal{W}_2^2(\nu_{g^*|s}, \nu_{g^{(\varepsilon)*}|s}) + 2\mathcal{W}_2^2(\nu_{g^{(\varepsilon)*}|s}, \nu_{h^{(\varepsilon)}|s})\enspace.
        \end{align*}
        
        Together with the fact that $\mathcal{E}(\mathcal{G}^{\varepsilon}) = (1-\varepsilon)^2\mathcal{E}(\mathcal{G}^0)$ we recover the main statement:
        
        \begin{align*}
            0 \leq \mathcal{E}(\mathcal{G}^\varepsilon) \leq \mathcal{E}(\mathcal{H}^\varepsilon) \leq 2(1-\varepsilon)^2\big(\mathcal{E}(\mathcal{G}^0) + \inf_{g_\theta \in \mathcal{G}_\Theta} \mathcal{W}_2^2(\nu_{g^{(0)*}}, \nu_{g_\theta}) \big)\enspace.
        \end{align*}
        }
        \item[iii)] 
        Since $\mathbb{E}[g^*] = \mathbb{E}[g^{(0)*}]$, it becomes apparent that for any $\varepsilon\in[0, 1]$, the equality $\mathbb{E}[g^*] =\mathbb{E}[g^{(\varepsilon)*}]$ holds true. Further, using Jensen's inequality, we derive straightforwardly the upper-bound on the squared budget deviation:
        \begin{align*}
            \mathcal{D}(h^{(\varepsilon)} )^2 = \mathbb{E}(h^{(\varepsilon)} - g^{(\varepsilon)*})^2 \leq \mathbb{E}[(h^{(\varepsilon)}  - g^{(\varepsilon)*})^2] \leq \mathcal{W}_2^2 (\nu_{g^{(\varepsilon)*}}, \nu_{h^{(\varepsilon)} }), \quad \text{for all } h^{(\varepsilon)}  \in \mathcal{H}^{(\varepsilon)}\enspace.
        \end{align*}
        Finally, replacing $h^{(\varepsilon)}$ with $\Tilde{h}^{(\varepsilon)}$ in the inequality above concludes the proof.
 %the inequality for $\norm{D(\Tilde{h}^{(\varepsilon)})}^2$ is therefore quite straightforward.
    \end{enumerate}
\end{proof}

\subsection{Statistical Guarantees}\label{subappx:cvg}

First, we establish the estimation guarantee before delving into the fairness guarantee. Note that we have adapted the estimation guarantee sequentially from \cite{gouic2020projection}, and \cite{bernton2019parameter} to account for the parametric framework. 

First, recall that throughout the paper, Assumption~\ref{assu:general} must hold.

\paragraph{Estimation Guarantee for $\varepsilon = 0$}

We denote $\Hat{\nu}_g$ as the empirical measure of $\nu_g$. To outline our statistical guarantees, let us present all the necessary assumptions:

%To this end, we require additional assumptions on $g^{(0)*}$ and its empirical counterpart $\Hat{g}^{(0)}$.
\begin{assu}[Smoothness \& Bound assumption]\label{assu:Lips}
    We assume that $(\Hat{g}(\cdot, s) )_{s\in\mathcal{S}}$ are uniformly Lipschitz and the estimator $\Hat{g}$ is bounded.
\end{assu}
With this assumption, we are then able to derive the following estimation guarantee for $\Hat{g}^{(0)}$, estimator of $g^{(0)*}$.
\begin{lem}[adapted from \cite{gouic2020projection} Thm.~8]
\label{lem:gouic}
    Under Assumption~\ref{assu:Lips}, and, as $n\to \infty$, assuming the following $L_2$-consistency
$$
\mathbb{E}[(g^*(\boldsymbol{X}, S) - \hat{g}(\boldsymbol{X}, S))^2] \to 0\enspace,
$$
we have both,
$$
\mathcal{W}_2^2 (\Hat{\nu}_{\hat{g}|s}, \nu_{g^{*}|s}) \to 0 \quad \text{and}\quad \mathcal{W}_2^2 (\Hat{\nu}_{\hat{g}^{(0)}|s}, \nu_{g^{(0)*}}) \to 0\quad\text{a.s.}
$$  
%$$
%\mathcal{W}_2(\nu_{g^{(0)*}}, \Hat{\nu}_{\Hat{g}^{(0)}}) \to 0 \quad a.s \enspace.
%$$
\end{lem}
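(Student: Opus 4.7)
\bigskip

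\noindent\textbf{Proof proposal.} The plan is to handle the two convergences separately and then note that the second one essentially combines the first with continuity of the barycenter/transport construction. For the first convergence, I would split
$$
\mathcal{W}_2^2(\hat{\nu}_{\hat{g}|s}, \nu_{g^{*}|s}) \leq 2\mathcal{W}_2^2(\hat{\nu}_{\hat{g}|s}, \nu_{\hat{g}|s}) + 2\mathcal{W}_2^2(\nu_{\hat{g}|s}, \nu_{g^{*}|s}).
$$
For the second term I would use the standard coupling bound $\mathcal{W}_2^2(\nu_{\hat{g}|s}, \nu_{g^{*}|s}) \leq \mathbb{E}[(\hat{g}(\boldsymbol{X},S)-g^{*}(\boldsymbol{X},S))^2 \mid S=s]$, which vanishes thanks to the $L_2$-consistency together with $p_s > 0$. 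For the first term I would invoke convergence of the empirical measure in 2-Wasserstein: since $\hat{g}$ is bounded by Assumption~\ref{assu:LipsGen}, the class $\nu_{\hat{g}|s}$ has uniformly bounded support, hence weak convergence of empirical measures (Varadarajan/Glivenko--Cantelli type) upgrades to $\mathcal{W}_2$-convergence via uniform integrability of the squared norm.

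For the second convergence, I would use the closed form of the optimal fair predictor,
$$
g^{(0)*}(\boldsymbol{x},s) = \Big(\sum_{s'\in\mathcal{S}} p_{s'} Q_{g^{*}|s'}\Big) \circ F_{g^{*}|s}(g^{*}(\boldsymbol{x},s)),
$$
and its empirical counterpart $\hat{g}^{(0)}$. By Proposition 2.5 and the pushforward characterization, $\nu_{g^{(0)*}}$ is the image of $\nu_{g^{*}|s}$ under the barycenter transport map, and $\hat{\nu}_{\hat{g}^{(0)}|s}$ is the image of $\hat{\nu}_{\hat{g}|s}$ under its empirical analogue. The key is that the mapping $\nu \mapsto Q_{\nu}$ (and hence the barycenter transport map) is continuous in $\mathcal{W}_2$ on the set of measures with bounded support, and that the composition with $F_{g^{*}|s}$ is stable under perturbations because the uniform Lipschitzness of $\hat{g}(\cdot,s)$ together with non-atomicity (Assumption~\ref{assu:general}) gives uniform continuity of the CDFs. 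Combining these yields $\mathcal{W}_2^2(\hat{\nu}_{\hat{g}^{(0)}|s}, \nu_{g^{(0)*}}) \to 0$ from the already established $\mathcal{W}_2^2(\hat{\nu}_{\hat{g}|s}, \nu_{g^{*}|s}) \to 0$.

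I expect the main obstacle to be the second part: making the continuity of the barycenter/transport construction precise enough to transfer a.s.\ convergence through composition of quantile and CDF estimators. Standard arguments would require either uniform convergence of $\hat{F}_{\hat{g}|s}$ and $\hat{Q}_{\hat{g}|s}$ (which the bounded support plus Lipschitzness deliver via a Glivenko--Cantelli and a quantile-inversion argument) or a direct stability result for Wasserstein barycenters in one dimension. Since \cite{gouic2020projection} Thm.~8 already formalizes exactly this stability in the fair-regression setting, the proof essentially consists of verifying that Assumption~\ref{assu:LipsGen} implies the hypotheses of their theorem (boundedness, Lipschitzness, non-atomicity, $L_2$-consistency) and then invoking it; the two almost-sure limits then follow by a Borel--Cantelli / subsequence argument along the conditional Wasserstein distances for each $s\in\mathcal{S}$.
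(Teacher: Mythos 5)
Your proposal is correct and matches the paper's treatment: the paper gives no standalone proof of this lemma, invoking Le Gouic et al.\ Thm.~8 directly under Assumption~\ref{assu:LipsGen} and the $L_2$-consistency, exactly as your closing paragraph describes. Your preliminary sketch (triangle inequality, the coupling bound $\mathcal{W}_2^2(\nu_{\hat{g}|s},\nu_{g^*|s})\leq \mathbb{E}[(\hat{g}-g^*)^2\mid S=s]$ with $p_s>0$, Varadarajan-type empirical convergence under boundedness, and stability of the one-dimensional barycenter map via uniform CDF/quantile convergence) is a faithful expansion of the standard argument behind that cited theorem.
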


Let us now consider the estimation guarantees in the parametric framework for $\Hat{g}_\theta$, estimator of $\Tilde{g}_\theta$. The Minimum Wasserstein Estimator (MWE) can be computationally intractable, especially in settings where simulating data from the model is possible but evaluating its density is not. In such cases, the Minimum Expected Wasserstein Estimator (MEWE) may offer a more computationally convenient alternative, where the expectation is replaced with a Monte Carlo approximation. Further, given that we explicitly want to sample from a parametric distribution, this approach is particularly attractive (see~\cite{bernton2019parameter} for more details). For simplicity, since it has been shown that MEWE converges to MWE and since in practice we have chosen a sufficiently large Monte Carlo sample to generate $\Hat{g}_\theta$, we omit the assumptions associated with Monte Carlo approximation.

\begin{assu}
\label{assu:mewe}
%Assume that the set of parameters $\Theta$ has $\norm{\cdot}_\Theta$ as a norm. 
The assumptions are as follows:
    \begin{enumerate}
        \item $\mathcal{G}_\Theta$ is a family of continuous distributions.
        \item The map $\theta \mapsto \nu_{g_\theta}$ is continuous.
        %\item For some $\alpha > 0$, the set $B^*(\alpha) = \{ \theta \in \Theta: \mathcal{W}_2(\nu_{g^{(0)*}}, \nu_{g_\theta}) \leq \inf_{\theta\in\Theta}\mathcal{W}_2(\nu_{g^{(0)*}}, \nu_{g_\theta}) + \alpha \}$ is bounded.
        \item For some $\alpha > 0$, the set $B_n(\alpha) = \{ \theta \in \Theta: \mathcal{W}_2(\Hat{\nu}_{\Hat{g}^{(0)}}, \nu_{g_\theta}) \leq \inf_{g_\theta\in\mathcal{G}_\Theta}\mathcal{W}_2(\Hat{\nu}_{\Hat{g}^{(0)}}, \nu_{g_\theta}) + \alpha \}$ is bounded.
        %\item For some $\alpha > 0$, the set $B_{*}(\alpha) = \{ ... \}$
        %\item For any $m\geq 1$ if $\norm{\Hat{\theta} -\theta}_\Theta \to 0$, then $\mathcal{W}_2(\Hat{\nu}_{\Hat{g}_\theta}, ...) ...$ as $n\to\infty$.
        %\item ...
        %\item (2.8.) For some $\alpha > 0$, the set $B_n(\alpha) = \{ \theta \in \Theta: \mathcal{W}_2(\Hat{\nu}_{\Hat{g}_\theta}, \nu_{g_\theta}) \leq \inf_{\theta\in\Theta}\mathcal{W}_2(\Hat{\nu}_{\Hat{g}_\theta}, \nu_{g_\theta}) + \alpha \}$ is bounded.
    \end{enumerate}
\end{assu}

\begin{lem}[Existence and consistency, adapted from \cite{bernton2019parameter} Thm.~2.1 and Thm.~2.4]
\label{lem:bernton}
    Under Assumption~\ref{assu:Lips} and Assumption~\ref{assu:mewe}, as $n\to\infty$,
    $$
    \inf_{\Hat{g}_\theta\in\mathcal{G}_\Theta} \mathcal{W}_2(\Hat{\nu}_{\Hat{g}^{(0)}}, \Hat{\nu}_{\Hat{g}_{\theta}}) \to \inf_{g_\theta\in\mathcal{G}_\Theta}\mathcal{W}_2(\nu_{g^{(0)*}}, \nu_{g_\theta})\quad a.s.\enspace,
    $$
    and \textit{a.s.} for a large enough sample, the sets $\argmin{}_{\Hat{g}_\theta\in\mathcal{G}_\Theta} \mathcal{W}_2(\Hat{\nu}_{\Hat{g}^{(0)}}, \Hat{\nu}_{\Hat{g}_{\theta}})$ are non-empty and form a bounded sequence with
    $$
    \lim \sup_{n\to\infty} \argmin{\Hat{g}_\theta\in\mathcal{G}_\Theta} \mathcal{W}_2(\Hat{\nu}_{\Hat{g}^{(0)}}, \Hat{\nu}_{\Hat{g}_{\theta}}) \subset \argmin{g_\theta\in\mathcal{G}_\Theta}\mathcal{W}_2(\nu_{g^{(0)*}}, \nu_{g_\theta})\enspace.
    $$
\end{lem}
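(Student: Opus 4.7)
The plan is to adapt Bernton et al.'s Theorems 2.1 and 2.4 to our two-stage setting, where the empirical reference measure is not the raw data distribution but the empirical counterpart $\hat\nu_{\hat g^{(0)}}$ of the Wasserstein barycenter. The key observation is that the classical MEWE consistency argument only requires the reference empirical measure to converge (a.s.\ in $\mathcal{W}_2$) to a fixed target measure, plus continuity and the coercivity-type condition in Assumption~\ref{assu:mewe}. Once this convergence is secured, the rest is an $M$-estimator consistency argument for the functional $\theta \mapsto \mathcal{W}_2(\mu_n, \nu_{g_\theta})$ with a perturbed objective $\mu_n = \hat\nu_{\hat g^{(0)}}$.

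First, I would invoke Lemma~\ref{lem:gouic} to obtain $\mathcal{W}_2(\hat\nu_{\hat g^{(0)}}, \nu_{g^{(0)*}}) \to 0$ a.s. Combined with the triangle inequality,
\begin{equation*}
\bigl|\mathcal{W}_2(\hat\nu_{\hat g^{(0)}}, \hat\nu_{\hat g_\theta}) - \mathcal{W}_2(\nu_{g^{(0)*}}, \nu_{g_\theta})\bigr|
\leq \mathcal{W}_2(\hat\nu_{\hat g^{(0)}}, \nu_{g^{(0)*}}) + \mathcal{W}_2(\hat\nu_{\hat g_\theta}, \nu_{g_\theta}),
\end{equation*}
this reduces the problem to controlling $\mathcal{W}_2(\hat\nu_{\hat g_\theta}, \nu_{g_\theta})$ uniformly enough in $\theta$ to push the convergence through an infimum. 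Since Assumption~\ref{assu:mewe}.1--2 ensure $\nu_{g_\theta}$ is continuous in $\theta$ and has finite second moments (A.~\ref{assu:general} carries over), the standard Varadarajan-type result on empirical Wasserstein convergence gives pointwise a.s. convergence; the coercivity from Assumption~\ref{assu:mewe}.3 confines the argmin sequence to a bounded set and lets us upgrade pointwise to the infimum.

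Next, to prove the first convergence, I would sandwich the empirical infimum. For the upper bound, pick $\theta^* \in \arg\min_{g_\theta} \mathcal{W}_2(\nu_{g^{(0)*}}, \nu_{g_\theta})$ (or an $\alpha$-minimizer if the argmin is empty) and apply the triangle inequality along this fixed $\theta^*$. For the lower bound, let $\hat\theta_n$ be an approximate minimizer of the empirical objective; by the coercivity hypothesis $(\hat\theta_n)$ lies eventually in a bounded set $B(\alpha)$, so by tightness (Prokhorov) one can extract a subsequential limit $\theta_\infty$, along which continuity of $\theta \mapsto \nu_{g_\theta}$ in $\mathcal{W}_2$ and the triangle inequality give $\liminf_n \mathcal{W}_2(\hat\nu_{\hat g^{(0)}}, \hat\nu_{\hat g_{\hat\theta_n}}) \geq \mathcal{W}_2(\nu_{g^{(0)*}}, \nu_{g_{\theta_\infty}}) \geq \inf_\theta \mathcal{W}_2(\nu_{g^{(0)*}}, \nu_{g_\theta})$. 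Combining both bounds yields the claimed convergence of infima. The second claim on $\limsup \arg\min$ then follows from the same subsequence argument: any cluster point of $(\hat\theta_n)$ must achieve the limiting infimum, hence lies in the population $\arg\min$, which is exactly the $\limsup$ inclusion in Kuratowski sense.

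The principal obstacle is that our \emph{reference} empirical measure $\hat\nu_{\hat g^{(0)}}$ is itself a plug-in object built from the base estimator $\hat g$ and the calibrated barycenter construction, rather than an i.i.d.\ sample from a fixed law; fortunately, Lemma~\ref{lem:gouic} delivers precisely the a.s.\ $\mathcal{W}_2$-convergence needed, so the perturbation is absorbed by the triangle inequality and does not interact with the $\theta$-optimization. A secondary subtlety is the swap between the Monte Carlo simulation inside MEWE and the theoretical model measure $\nu_{g_\theta}$, but following the paper's convention we absorb this into Assumption~\ref{assu:mewe} by taking the Monte Carlo size sufficiently large, so Bernton et al.'s argument for $\mathcal{W}_2(\hat\nu_{\hat g_\theta}, \nu_{g_\theta}) \to 0$ transfers verbatim under the finite-second-moment assumption inherited from A.~\ref{assu:general}.
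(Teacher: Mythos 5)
Your proposal is correct and follows the route the paper intends: the paper itself offers no written proof of this lemma beyond citing Bernton et al.\ and noting that the Monte Carlo approximation issues are absorbed by convention, and your argument (plug in Lemma~\ref{lem:gouic} for the a.s.\ $\mathcal{W}_2$-convergence of the reference measure, then run the standard sandwich/epi-convergence argument with the coercivity of $B_n(\alpha)$) is exactly the adaptation being invoked. One cosmetic slip: extracting a convergent subsequence from the bounded parameter sequence $(\hat\theta_n)\subset\Theta\subset\mathbb{R}^k$ is Bolzano--Weierstrass, not Prokhorov, though this does not affect the argument.
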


%\mathcal{W}_2^2 (\Hat{\nu}_{\hat{g}_{|s}}, \nu_{g^*_{|s}})    &\leq 2 \mathbb{E}[\norm{g^*(\boldsymbol{X}, S) - \hat{g}(\boldsymbol{X}, S)}^2] + 2 L^2 \mathbb{E}[\norm{\boldsymbol{X} - T_x(\boldsymbol{X}, S)}^2] + 8M\mathbb{P}(T_s(\boldsymbol{X}, S)\neq S) \\ & \leq 2 \mathbb{E}[\norm{g^*(\boldsymbol{X}, S) - \hat{g}(\boldsymbol{X}, S)}^2] + (2L^2 + 8M)\mathcal{W}_2^2( \mathbb{P}_{(\boldsymbol{X}, S)}, \Hat{\mathbb{P}}_{(\boldsymbol{X}, S)})\enspace,
    %\end{align*}
%which concludes the first part of the proof.

%\fran{Q: Si this enough ?}
%We recall that 
%$$
%\Tilde{g}_\theta = \argmin{g_\theta\in\mathcal{G}_\Theta}\mathcal{W}_2(\nu_{g^{(0)*}}, \nu_{g_\theta})\enspace.
%$$
%Following the above results we can now prove the convergence of the parametric fair estimator $\Hat{\nu}_{\Hat{g}_\theta}$ to $\nu_{\Tilde{g}_\theta}$. %To do so, as $\n\to +\infty$, we assume that
%$$
%\mathbb{E}[(g^*(\boldsymbol{X}, S) - \hat{g}(\boldsymbol{X}, S))^2] \to 0\enspace.
%$$
%Then, from the above inequality (first part of the proof) and the Wasserstein barycenter setup, we have both,
%$$
%\mathcal{W}_2^2 (\Hat{\nu}_{\hat{g}|s}, \nu_{g^{*}|s}) \to 0 \quad \text{and}\quad \mathcal{W}_2^2 (\Hat{\nu}_{\hat{g}^{(0)}|s}, \nu_{g^{(0)*}}) \to 0\quad\text{a.s.}
%$$
%Indeed, from the triangle inequality, we have
%$$
%0 \leq \mathcal{W}_2 (\Hat{\nu}_{\hat{g}_\theta}, \nu_{\Tilde{g}_\theta}) \leq \mathcal{W}_2 (\Hat{\nu}_{\hat{g}_\theta}, ...) + \mathcal{W}_2 (..., \nu_{\Tilde{g}_\theta})
%$$
Note that the detailed rate of convergence, under stronger assumptions, can also be found in \cite{bernton2019parameter}.

%{\color{green} --> problem, c'était avec \cite{bernton2019parameter}, mais plutot $\mathcal{W}_2(\nu_{\hat{g}_\theta}, \nu_{g_\theta})$}
%{\color{blue} --> sorry my bad, c'est bien de $\mathcal{W}_2 (\Hat{\nu}_{\Hat{g}_\theta^{(0)}}, \nu_{g_\theta^{(0)*}})$}
%... which implies the convergence of $\Hat{\nu}_{\hat{g}_\theta^{(0)}|s}$.

\paragraph{Estimation Guarantee for $\varepsilon \geq 0$ (Corollary~\ref{coro:estimation})}

Since, with the geodesic interpolation, we have
$$
\Tilde{h}^{(\varepsilon)}(\boldsymbol{X}, S) = (1-\varepsilon)\cdot \Tilde{g}_\theta(\boldsymbol{X}, S) + \varepsilon \cdot g^{*}(\boldsymbol{X}, S)\enspace,
$$
and 
$$
\hat{h}^{(\varepsilon)}(\boldsymbol{X}, S) = (1-\varepsilon)\cdot \Hat{g}_\theta(\boldsymbol{X}, S) + \varepsilon \cdot \Hat{g}(\boldsymbol{X}, S)\enspace,
$$
by the above results (Lemma~\ref{lem:gouic} and Lemma~\ref{lem:bernton}), as $n\to+\infty$, we have directly
$$
\mathcal{W}_2\left(\Hat{\nu}_{\hat{h}^{(\varepsilon)}}, \nu_{\Tilde{h}^{(\varepsilon)}}\right) \to 0\quad a.s\enspace.
$$

%\fran{à re-vérifier: $\Tilde{g}_\theta$ fair ?}
\paragraph{Fairness Guarantee (Corollary~\ref{coro:fairness})} Recall Assumptions \ref{assu:Lips}-\ref{assu:mewe} and the above $L_2$-consistency of $\Hat{g}$.
Given $\varepsilon \in[0, 1]$, we observe asymptotic fairness with $\varepsilon$-RI. Indeed, since by construction $\Tilde{g}_\theta$ is independent to $S$ and by applying triangle inequality, we have for all $s\in\mathcal{S}$,
\begin{align*}
\mathcal{W}_1\left(\hat{\nu}_{\hat{h}^{(\varepsilon)}|s}, \hat{\nu}_{\hat{h}^{(\varepsilon)}}\right) &\leq \mathcal{W}_1\left(\hat{\nu}_{\hat{h}^{(\varepsilon)}|s}, \nu_{\Tilde{h}^{(\varepsilon)}|s}\right) + 
\mathcal{W}_1\left(\nu_{\Tilde{h}^{(\varepsilon)}|s}, \nu_{\Tilde{h}^{(\varepsilon)}}\right)+
\mathcal{W}_1\left(\nu_{\Tilde{h}^{(\varepsilon)}}, \hat{\nu}_{\hat{h}^{(\varepsilon)}}\right)\\
&\leq \varepsilon\times \mathcal{U}(g^*) + \mathcal{W}_1\left(\hat{\nu}_{\hat{h}^{(\varepsilon)}|s}, \nu_{\Tilde{h}^{(\varepsilon)}|s}\right) + \mathcal{W}_1\left(\nu_{\Tilde{h}^{(\varepsilon)}}, \hat{\nu}_{\hat{h}^{(\varepsilon)}}\right)\enspace,
\end{align*}
where the two right terms on the right side of the inequality tends towards zero. This convergence is achieved due to the fact that convergence in $\mathcal{W}_2$ implies convergence in $\mathcal{W}_1$.
%can use the fact that congervence in 2-Wasserstein distance implies convergence in 1-Wasserstein distance.

%\bibliography{biblio}

%\end{document}

\end{document}

% --- supplement: AISTATS2024PaperPack/supplement.tex ---

% If your paper is accepted and the title of your paper is very long,
% the style will print as headings an error message. Use the following
% command to supply a shorter title of your paper so that it can be
% used as headings.
%
%\runningtitle{I use this title instead because the last one was very long}

% If your paper is accepted and the number of authors is large, the
% style will print as headings an error message. Use the following
% command to supply a shorter version of the authors names so that
% they can be used as headings (for example, use only the surnames)
%
%\runningauthor{Surname 1, Surname 2, Surname 3, ...., Surname n}

% Supplementary material: To improve readability, you must use a single-column format for the supplementary material.
\onecolumn
\aistatstitle{Instructions for Paper Submissions to AISTATS 2024: \\
Supplementary Materials}

\section{FORMATTING INSTRUCTIONS}

To prepare a supplementary pdf file, we ask the authors to use \texttt{aistats2024.sty} as a style file and to follow the same formatting instructions as in the main paper.
The only difference is that the supplementary material must be in a \emph{single-column} format.
You can use \texttt{supplement.tex} in our starter pack as a starting point, or append the supplementary content to the main paper and split the final PDF into two separate files.

Note that reviewers are under no obligation to examine your supplementary material.

\section{MISSING PROOFS}

The supplementary materials may contain detailed proofs of the results that are missing in the main paper.

\subsection{Proof of Lemma 3}

\textit{In this section, we present the detailed proof of Lemma 3 and then [ ... ]}

\section{ADDITIONAL EXPERIMENTS}

If you have additional experimental results, you may include them in the supplementary materials.

\subsection{The Effect of Regularization Parameter}

\textit{Our algorithm depends on the regularization parameter $\lambda$. Figure 1 below illustrates the effect of this parameter on the performance of our algorithm. As we can see, [ ... ]}

\vfill

% --- supplement: supplement.tex ---

% If your paper is accepted and the title of your paper is very long,
% the style will print as headings an error message. Use the following
% command to supply a shorter title of your paper so that it can be
% used as headings.
%
%\runningtitle{I use this title instead because the last one was very long}

% If your paper is accepted and the number of authors is large, the
% style will print as headings an error message. Use the following
% command to supply a shorter version of the authors names so that
% they can be used as headings (for example, use only the surnames)
%
%\runningauthor{Surname 1, Surname 2, Surname 3, ...., Surname n}

% Supplementary material: To improve readability, you must use a single-column format for the supplementary material.
\onecolumn
\aistatstitle{Instructions for Paper Submissions to AISTATS 2024: \\
Supplementary Materials}

\section{FORMATTING INSTRUCTIONS}

To prepare a supplementary pdf file, we ask the authors to use \texttt{aistats2024.sty} as a style file and to follow the same formatting instructions as in the main paper.
The only difference is that the supplementary material must be in a \emph{single-column} format.
You can use \texttt{supplement.tex} in our starter pack as a starting point, or append the supplementary content to the main paper and split the final PDF into two separate files.

Note that reviewers are under no obligation to examine your supplementary material.

\section{MISSING PROOFS}

The supplementary materials may contain detailed proofs of the results that are missing in the main paper.

\subsection{Proof of Lemma 3}

\textit{In this section, we present the detailed proof of Lemma 3 and then [ ... ]}

\section{ADDITIONAL EXPERIMENTS}

If you have additional experimental results, you may include them in the supplementary materials.

\subsection{The Effect of Regularization Parameter}

\textit{Our algorithm depends on the regularization parameter $\lambda$. Figure 1 below illustrates the effect of this parameter on the performance of our algorithm. As we can see, [ ... ]}

\vfill